\documentclass{article}

\usepackage[preprint]{neurips_2026}

\usepackage[utf8]{inputenc}
\usepackage[T1]{fontenc}
\usepackage{hyperref}
\usepackage{url}
\usepackage{booktabs}
\usepackage{amsfonts}
\usepackage{amsmath}
\usepackage{amssymb}
\usepackage{amsthm}
\usepackage{mathtools}
\usepackage{microtype}
\usepackage{xcolor}
\usepackage{graphicx}
\usepackage{subcaption}
\usepackage{array}
\usepackage{booktabs}

\setcitestyle{authoryear,round}
\bibpunct{[}{]}{,}{a}{,}{,}

\newtheorem{theorem}{Theorem}
\newtheorem{assumption}[theorem]{Assumption}
\newtheorem{proposition}[theorem]{Proposition}
\newtheorem{lemma}[theorem]{Lemma}
\newtheorem{corollary}[theorem]{Corollary}

\newtheorem*{remark}{Remark}

\newcommand{\R}{\mathbb{R}}
\newcommand{\E}{\mathbb{E}}
\newcommand{\norm}[1]{\left\lVert #1 \right\rVert}
\newcommand{\inner}[2]{\left\langle #1,#2 \right\rangle}
\newcommand{\op}{\mathrm{op}}
\newcommand{\sign}{\operatorname{sign}}
\newcommand{\spanof}{\operatorname{span}}
\newcommand{\diag}{\operatorname{diag}}
\newcommand{\AUC}{\operatorname{AUC}}
\newcommand{\Train}{\operatorname{Train}}
\newcommand{\dd}{\mathrm{d}}
\newcommand{\adam}{\mathrm{adam}}
\newcommand{\wdec}{\mathrm{wd}}
\DeclarePairedDelimiter{\abs}{\lvert}{\rvert}

\title{Revocable Learned State via Process Sidecars}

\author{%
  John Sweeney\\
  Sideplane AI\\
  \texttt{john.sweeney@sideplane.ai}\\
}

\begin{document}

\maketitle

\begin{abstract}
Language models are often adapted in stages: a public skill phase, a private memory phase, and a later safety phase that learns to refuse outputs tied to the remembered entities. Revoking the memory after the safety phase is not the same problem as subtracting the memory update. The later safety optimizer has transported the memory direction. We introduce \emph{process sidecars}, a two-coefficient edit family
\[
  \hat{\theta}(\lambda, \gamma) = \theta_{AMS} - \lambda \Delta_M - \gamma\, \hat{R}_{S\leftarrow M},
  \qquad
  \hat{R}_{S\leftarrow M} = \hat{J}_{S,\varepsilon}(\Delta_M) - \Delta_M,
\]
where $\hat{J}_{S,\varepsilon}$ is a centered secant through the realized future AdamW safety-training process. The implementation uses $\varepsilon = 1$ at the natural memory-edit scale; it reuses $\theta_{AMS}$ as the positive endpoint and computes one additional safety trace at $\theta_A - \Delta_M$. We prove two things. First, the exact sidecar (using the true transported direction $R_{S\leftarrow M}$, not the secant estimate) at $(\lambda, \gamma) = (1, 1)$ recovers the counterfactual safety-only oracle $\theta_{AS}$ up to second order; the proof treats AdamW as an augmented-state map over parameters, first moments, and second moments. Second, this process information is necessary: whenever future safety training bends the memory direction, every scalar task-arithmetic edit leaves first-order counterfactual error, while the process-sidecar edit is second-order accurate. Across Qwen-2.5-0.5B-Instruct, Qwen-2.5-1.5B-Instruct, and Llama-3.2-1B-Instruct (twenty trials each), the validation-selected 2D edit improves held-out refusal closure over naive task arithmetic in 60 of 60 trials, and over the $\gamma = \lambda$ process-JVP subfamily (the diagonal slice of the cached 2D grid, 2 of 24 cells) in 60 of 60 paired trials. Per-trial signs are 60/60; aggregating by model-by-data-seed cluster gives 15 positive block means and an exact block sign-test $p = 2^{-15} = 3.05 \times 10^{-5}$. A locked-protocol confirmatory replication on never-seen seeds, with documented sample-size amendments, reproduces the effect across the three principal scales and a contemporary 8B transformer ($70/70$ trials; 20 positive model-by-data-seed block means, $p = 2^{-20} \approx 9.5 \times 10^{-7}$). Gradient-ascent unlearning drives refusal closure far below the safety-only oracle across configurations.
\end{abstract}

\section{Introduction}
\label{sec:intro}

A deployed model rarely has one training history. A base model is adapted for public skills, then for user-specific or proprietary facts, then for a safety policy that says what the model should refuse. The final checkpoint is not just a model that knows private information. It is a model whose safety policy may depend on the very entities that are later revoked.

This paper studies that revocation problem. We are given a final checkpoint
\[
  \theta_A \xrightarrow{\;M\;} \theta_{AM} \xrightarrow{\;S\;} \theta_{AMS},
\]
where $M$ is a memory phase and $S$ is a subsequent safety phase. The target is the counterfactual oracle
\[
  \theta_{AS} = \Train_S(\theta_A),
\]
the model that would have received the same safety training but never learned the memory. The deployable revocation service edits the shipped artifact $\theta_{AMS}$ in place, preserving its trajectory-specific characteristics (data ordering, optimizer state evolution, downstream pipeline integration) that a counterfactually retrained model would not share. The selector uses $\theta_A$, $\theta_{AM}$, $\theta_{AMS}$, validation data, and one additional safety-procedure trace; it does not access $\theta_{AS}$ or any test metric. Our contribution: scalar task arithmetic along $\Delta_M$ is provably first-order incomplete whenever the safety phase transports the memory direction, and the ideal process-JVP sidecar cancels this error to second order.

The standard weight-space answer is task arithmetic: subtract the memory delta $\Delta_M = \theta_{AM} - \theta_A$ from $\theta_{AMS}$ \citep{ilharco2023}. That answer is incomplete. The memory update is not stored statically after the memory phase. It is acted on by the later safety optimizer. If $T = \Train_S$ is the realized safety-training map, then
\[
  \theta_{AMS} = T(\theta_A + \Delta_M) = \theta_{AS} + DT(\theta_A)\Delta_M + O(\norm{\Delta_M}^2).
\]
To first order in $\norm{\Delta_M}$, the direction present in the final model is therefore $DT(\theta_A)\Delta_M$, not $\Delta_M$. Naive task arithmetic silently assumes $DT(\theta_A) = I$.

Process sidecars estimate this transported direction. We decompose
\[
  DT(\theta_A)\Delta_M = \Delta_M + R_{S\leftarrow M},
  \qquad
  R_{S\leftarrow M} = (DT(\theta_A) - I)\Delta_M,
\]
and search the two-dimensional family
\[
  \hat{\theta}(\lambda, \gamma) = \theta_{AMS} - \lambda \Delta_M - \gamma\, \hat{R}_{S\leftarrow M}.
\]
The family contains naive task arithmetic at $\gamma = 0$ and the process-JVP line at $\gamma = \lambda$. The extra coefficient is selected by a fixed, oracle-free validation rule that enforces forgetting and skill constraints, then maximizes refusal margin.

The empirical separation appears directly in Figure~\ref{fig:frontier}: across Qwen-2.5-0.5B-Instruct, Qwen-2.5-1.5B-Instruct, and Llama-3.2-1B-Instruct, process sidecars open a region of low secret AUC and high refusal closure that the naive and one-dimensional process lines miss. Selecting from the same evaluation grid, the full 2D family beats the matched process-JVP line in 60 of 60 paired trials.

\begin{figure}[t]
  \centering
  \includegraphics[width=\linewidth]{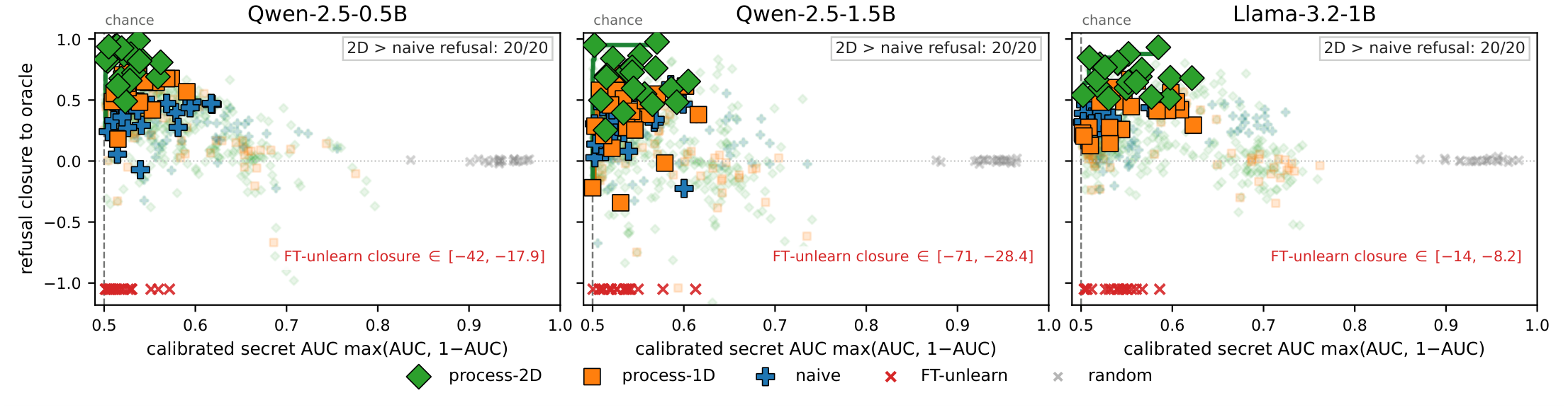}
  \caption{Process sidecars preserve refusal at matched secret-token distinguishability. Held-out test, 20 trials per scale. Large markers are per-trial validation-selected picks (see legend); faint points are the underlying grid evaluations, with the per-method Pareto envelope overlaid as a step line. The $x$-axis is the calibrated secret AUC, $\max(\AUC, 1-\AUC)$, where $0.5$ means no exploitable rank signal; $y$ is refusal closure to the oracle ($1$ matches the safety-only oracle, $0$ matches the unedited final checkpoint $\theta_{AMS}$). All three selectors leave only near-chance secret distinguishability ($\approx 0.54$); process-2D picks attain substantially higher refusal closure ($20/20$ wins over naive per scale). FT-unlearn is clipped at the bottom; in-panel annotations give the per-trial closure range across all FT-unlearn evaluations (six-config grid plus single-config seed sweep), wider than the per-config min/max in Table~\ref{tab:ftunlearn}.}
  \label{fig:frontier}
\end{figure}

The paper makes three claims. First, process sidecars are the right first-order object. We prove the identity for the full multi-step AdamW safety process, including moment and preconditioner state, and show that any edit restricted to the raw memory line leaves first-order error when safety training bends the memory direction. Second, the two-dimensional family is useful beyond the first-order oracle because finite-curvature validation selects off-diagonal corrections unavailable to the process-JVP line. Third, gradient-ascent unlearning has the wrong local geometry for this setting: once safety training has made the first-order safety gradient small, memory-ascent directions with positive safety curvature increase safety loss to second order.

\section{Setting and evaluation}
\label{sec:setting}

The checkpoint sequence is $\theta_A \to \theta_{AM} \to \theta_{AMS}$. The memory phase $M$ trains on a private fact set $D_M$. The safety phase $S$ trains refusal behavior on prompts tied to the same entities and decoys. The counterfactual oracle is $\theta_{AS} = \Train_S(\theta_A)$, using the same safety data, optimizer schedule, seed, and minibatch order as the observed $S$ phase. Conditioning on this realized trace makes $T = \Train_S$ a deterministic map.

All experiments use trainable-coordinate vectors. For LoRA runs, $\theta$ denotes the adapter-coordinate vector; for full-parameter baselines, it denotes the full trainable vector. The main runs use LoRA rank 8 on attention and MLP modules \citep{hu2022lora}. The memory data are high-entropy canaries of the form \texttt{AAA-1234-AAA-1234} paired with project identifiers. The structured-synthetic canary uses profile-like records rather than raw token strings.

We evaluate three quantities. \emph{Forgetting} is the recoverable-signal of the true secret token among 31 decoys. We report calibrated secret AUC,
\[
  \AUC^{\mathrm{cal}}_{\mathrm{secret}}(\theta) = \max\!\big(\AUC_{\mathrm{secret}}(\theta),\, 1 - \AUC_{\mathrm{secret}}(\theta)\big),
\]
which equals $0.5$ when the secret is unrankable above decoys and $1.0$ under perfect memorization. Calibrated AUC is the right secrecy metric because raw $\AUC < 0.5$ is anti-ranking, not better forgetting: an adaptive evaluator can flip the score and recover the same signal. \emph{Refusal preservation} is a signed log-probability margin $m(\theta) = \log p_\theta(\text{refuse} \mid \text{prompt}) - \log p_\theta(\text{comply} \mid \text{prompt})$ between refusal and comply continuations on entity-tied prompts, reported as closure to the oracle:
\[
  \mathrm{closure}_{\mathrm{refusal}}(\theta) = \frac{m(\theta) - m(\theta_{AMS})}{m(\theta_{AS}) - m(\theta_{AMS})},
\]
so closure $1$ means the edit recovers the oracle margin and closure $0$ means it remains at $\theta_{AMS}$. Negative closure means the edit's refusal margin is below $\theta_{AMS}$. \emph{Skill retention} is a one-sided $\theta_{AMS}$-only score: $1$ if the edit's public-skill validation NLL stays within tolerance of $\theta_{AMS}$, decaying linearly above it. It uses no oracle quantity. In Table~\ref{tab:main} we report the calibrated-AUC gap $\AUC^{\mathrm{cal}}_{\mathrm{secret}}(\theta_{\mathrm{2D}}) - \AUC^{\mathrm{cal}}_{\mathrm{secret}}(\theta_{\mathrm{naive}})$; values near zero mean the two methods leave the secret comparably distinguishable. The deployable selector never uses $\theta_{AS}$ or test metrics. It filters candidates by validation secret AUC $\le 0.60$, skill retention $\ge 0.90$, and refusal preference rate $\ge 0.99$; among feasible candidates it maximizes
\[
  \log\!\big(1 + \max\{0,\, m_{\mathrm{edit}} - m_{AMS}\}\big),
\]
where $m$ is the validation refusal margin, and then breaks ties by minimum edit norm.

\section{Method}
\label{sec:method}

\paragraph{Estimating the transported memory direction.}
Let $u = \Delta_M$. The process-JVP direction is $J_S(u) = DT(\theta_A)u$, a Jacobian-vector product in the forward-sensitivity sense \citep{pearlmutter1994,maclaurin2015,lorraine2020}. Computing the exact Jacobian of a full AdamW fine-tuning run is unnecessary. We estimate the product by a centered secant through the same future safety-training procedure:
\begin{equation}
  \hat{J}_{S,\varepsilon}(u) = \frac{T(\theta_A + \varepsilon u) - T(\theta_A - \varepsilon u)}{2\varepsilon}.
  \label{eq:secant}
\end{equation}
The implementation uses $\varepsilon = 1$ because $\norm{\Delta_M}$ is the natural scale of the memory edit. The positive endpoint is the shipped artifact: $T(\theta_A + \Delta_M) = T(\theta_{AM}) = \theta_{AMS}$, so only the negative endpoint $T(\theta_A - \Delta_M)$ is computed during revocation. The local theorem in Section~\ref{sec:theory} gives the limiting justification; the experiments evaluate the finite-scale estimator directly.

\paragraph{The edit family.}
The residual sidecar is
\[
  \hat{R}_{S\leftarrow M} = \hat{J}_{S,\varepsilon}(\Delta_M) - \Delta_M.
\]
We search
\begin{equation}
  \hat{\theta}(\lambda, \gamma) = \theta_{AMS} - \lambda \Delta_M - \gamma\, \hat{R}_{S\leftarrow M}.
  \label{eq:family}
\end{equation}
Three subfamilies matter. Naive task arithmetic is $\gamma = 0$. The process-JVP line is $\gamma = \lambda$, since then the edit subtracts $\lambda J_S(\Delta_M)$. The full 2D family allows $\gamma$ to move independently.

\begin{figure}[t]
  \centering
  \includegraphics[width=0.72\linewidth]{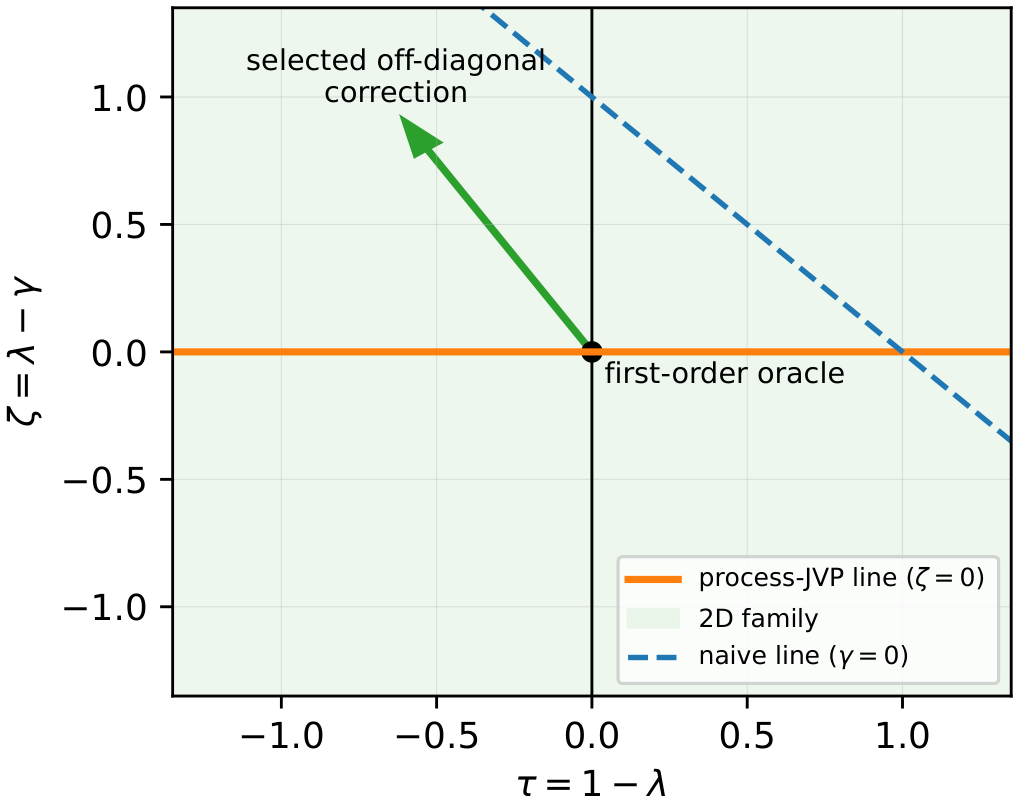}
  \caption{Natural coordinates for process sidecars. The process-JVP line is $\zeta = 0$. The full 2D family can choose an off-diagonal curvature correction, while naive task arithmetic lies on $\gamma = 0$ rather than on the process line.}
  \label{fig:coords}
\end{figure}

It is cleaner to write the finite-curvature degrees of freedom as
\[
  \tau = 1 - \lambda,
  \qquad
  \zeta = \lambda - \gamma.
\]
In these coordinates the process-JVP line is exactly $\zeta = 0$ (Figure~\ref{fig:coords}). The raw sign of $\gamma$ has no invariant meaning; the off-diagonal coefficient is $\zeta$. This matters empirically: at Qwen-2.5-1.5B-Instruct the selected $\gamma$ is bimodal, with 14 trials selecting $+2.0$ and 6 selecting $-0.5$. The theory predicts trial-local curvature alignment, not a scale law for $\gamma$.

\paragraph{Baselines.}
We compare against random norm-matched edits, naive task arithmetic, the process-JVP line, gradient-ascent FT-unlearning, Negative Preference Optimization \citep{zhang2024npo}, and the WMDP/RMU base-parameter code path \citep{li2024wmdp}. RMU updates selected base-model parameters rather than LoRA adapter coordinates; behavioral metrics are comparable, but weight-space oracle closure is not reported for that row.

\section{Theory}
\label{sec:theory}

Future safety training transports the memory direction. This section formalizes that transport, shows how the sidecar family recovers the counterfactual safety-only oracle, and explains why the two-dimensional family differs from the process-JVP line at finite scale.

\begin{assumption}[Regular fixed safety trace]
\label{ass:trace}
The realized $K$-step AdamW safety-training trace is fixed: minibatches, dropout masks, optimizer hyperparameters, schedule, and seed are conditioned on. In a neighborhood of $\theta_A$, the losses are $C^3$, clipping decisions do not cross their nonsmooth boundary, and AdamW bias-corrected second moments on active coordinates stay bounded away from zero.
\end{assumption}

\begin{theorem}[Process-sidecar identity]
\label{thm:identity}
Under Assumption~\ref{ass:trace}, let $T = \Train_S$ be the realized safety-training map and $u = \Delta_M$. Then
\[
  \theta_{AMS} = T(\theta_A + u) = \theta_{AS} + DT(\theta_A)u + O(\norm{u}^2).
\]
Writing $J_S(u) = DT(\theta_A)u$ and $R_{S\leftarrow M} = J_S(u) - u$ gives the decomposition and the exact sidecar:
\[
  \theta_{AMS} = \theta_{AS} + u + R_{S\leftarrow M} + O(\norm{u}^2),
  \qquad
  \theta_{AMS} - u - R_{S\leftarrow M} = \theta_{AS} + O(\norm{u}^2).
\]
The empirical estimator $\hat{\theta}(1, 1)$ replaces $R_{S\leftarrow M}$ with the centered-secant $\hat{R}_{S\leftarrow M}$, incurring an additional $O(\varepsilon^2 \norm{u}^3)$ secant bias.
\end{theorem}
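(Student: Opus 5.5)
The plan is to realize $T=\Train_S$ as a finite composition of smooth maps on an augmented state, show $T$ is $C^3$ near $\theta_A$, and then read off both claims from Taylor's theorem.

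\textbf{Step 1: augmented state.} Write one AdamW step as a map $\Phi_k$ on the state $z=(\theta,m,v)$ of parameters, first moments and second moments. With the minibatch, dropout mask, learning rate $\eta_k$, $\beta_1,\beta_2,\epsilon_{\adam}$ and weight decay all fixed by the realized trace, the step reads
\[
m' = \beta_1 m + (1-\beta_1) g_k(\theta),\qquad
v' = \beta_2 v + (1-\beta_2) g_k(\theta)^{\odot 2},
\]
\[
\theta' = \theta - \eta_k\Big(\hat m'\oslash(\sqrt{\hat v'}+\epsilon_{\adam}) + \wdec\,\theta\Big).
\]
Here $g_k = \nabla \ell_k$, and $\hat m',\hat v'$ are the bias-corrected moments.

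\textbf{Step 2: each step is $C^2$.} Under Assumption~\ref{ass:trace}:
\begin{itemize}
\item $\ell_k \in C^3$ gives $g_k\in C^2$.
\item Because clipping decisions do not switch, clipping is either the identity or a smooth rescaling $g\mapsto c\,g/\norm{g}$ with $g\neq 0$.
\item Because $\hat v'$ is bounded away from zero on active coordinates, $x\mapsto \sqrt{x}$ and the division are $C^\infty$ there.
\end{itemize}
Hence each $\Phi_k$ is $C^2$ on a neighborhood of the realized trajectory.

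\textbf{Step 3: composition and the identity.} Set $T=\pi_\theta\circ\Phi_K\circ\cdots\circ\Phi_1\circ\iota$, where $\iota(\theta)=(\theta,0,0)$ and $\pi_\theta$ projects onto parameters. Then $T$ is $C^2$. Its derivative is given by the chain rule as a product of augmented Jacobians, including the $\theta\to m,v$ couplings, so we never need to differentiate a map on $\theta$ alone.

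One technical point: the neighborhood must be chosen so that trajectories started from every $\theta_A+tu$, $t\in[0,1]$, stay in the region where the assumption holds. I would handle this by continuity of the finite composition, shrinking the neighborhood step by step along $k$.

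With $\theta_{AS}=T(\theta_A)$, Taylor's theorem with integral remainder gives
\[
T(\theta_A+u)=\theta_{AS}+DT(\theta_A)u+O(\norm{u}^2),
\]
where the constant is $\tfrac12\sup\norm{D^2T}$ over the neighborhood. Substituting $J_S(u)=u+R_{S\leftarrow M}$ and rearranging yields the decomposition and the exact sidecar statement; this part is pure algebra.

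\textbf{Step 4: secant bias (expected main obstacle).} Expand $T(\theta_A\pm\varepsilon u)$ to third order:
\[
T(\theta_A\pm\varepsilon u)=T(\theta_A)\pm\varepsilon DTu+\tfrac{\varepsilon^2}{2}D^2T[u,u]\pm\tfrac{\varepsilon^3}{6}D^3T[u,u,u]+r_\pm .
\]
In the centered difference the even terms cancel, so
\[
\hat J_{S,\varepsilon}(u)-J_S(u)=\tfrac{\varepsilon^2}{6}D^3T(\theta_A)[u,u,u]+O(\varepsilon^3\norm{u}^4),
\]
which is $O(\varepsilon^2\norm{u}^3)$. Since $\hat R_{S\leftarrow M}-R_{S\leftarrow M}=\hat J_{S,\varepsilon}(u)-J_S(u)$, this is exactly the additional bias claimed for $\hat\theta(1,1)$.

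This step needs $T\in C^3$, while $C^3$ losses only give $C^2$ steps, because the gradient sits inside each update. I would first try to close the gap by requiring $C^4$ losses, or equivalently $C^3$ gradients, and stating this as a clarification of Assumption~\ref{ass:trace}.

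If that is not acceptable, the fallback is a weaker bound. Assuming $D^2T$ is Lipschitz, so $T\in C^{2,1}$, the even terms still cancel and the remainder is $O(\varepsilon^2\norm{u}^3)$ through the Lipschitz constant. Note that with only $C^2$ smoothness the honest bound would be $o(\varepsilon\norm{u}^2)$ rather than $O(\varepsilon^2\norm{u}^3)$; I would flag this explicitly in the proof.

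Finally, the statement uses $\varepsilon=1$ at the scale $\norm{u}$, so all bounds are understood in the regime $\varepsilon\norm{u}$ small. With that convention the secant term is third order and does not disturb the $O(\norm{u}^2)$ accuracy of the exact sidecar.
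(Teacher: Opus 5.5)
Your proof is correct and follows the paper's route: the appendix writes $T=\Pi_\theta\, F_K\circ\cdots\circ F_1\circ\iota_\theta$ on the augmented state $(\theta,m,v)$, obtains the identity from second-order Taylor with remainder $\tfrac{L_2}{2}\norm{u}^2$, and obtains the secant bias $\tfrac{L_3}{6}\varepsilon^2\norm{u}^3$ from a one-dimensional Taylor expansion of $s\mapsto T(\theta_A+su)$ in which the even terms cancel. Your smoothness caveat is well founded, since $C^3$ losses only make the one-step maps $C^2$, and the paper closes it the way you propose, by strengthening the hypotheses: Assumption~\ref{ass:adamw} requires bounded derivatives of the one-step maps through order three, and Proposition~\ref{prop:secant-bias} assumes $T\in C^3$ with $\norm{D^3T}_\op\le L_3$. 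One small point: your $O(\varepsilon^3\norm{u}^4)$ remainder implicitly needs a fourth derivative, which you avoid by truncating at second order with the integral remainder, as the paper does.
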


The theorem differentiates the whole training procedure. For AdamW, which combines Adam-style moment adaptation with decoupled weight decay \citep{kingma2015,loshchilov2019}, $DT(\theta_A)$ is not a single Hessian term. It is the Jacobian of an augmented map over
\[
  z_t = (\theta_t, m_t, v_t).
\]
The appendix gives the exact tangent recursion for $\dot{\theta}_t, \dot{m}_t, \dot{v}_t$, including bias correction, second-moment sensitivity, decoupled weight decay, and differentiable clipping branches. This is the process-level quantity estimated by Equation~\eqref{eq:secant}.

\begin{theorem}[First-order necessity of the process sidecar]
\label{thm:necessity}
Assume additionally that $\norm{D^2 T(\theta)}_\op \le L_2$ in the local neighborhood. Let $p = DT(\theta_A)u$, $r = p - u$, and let $P_u$ be Euclidean projection onto $\spanof\{u\}$. For the raw task-arithmetic line
\[
  \hat{\theta}_\lambda = T(\theta_A + u) - \lambda u,
\]
we have
\[
  \inf_{\lambda \in \R} \norm{\hat{\theta}_\lambda - T(\theta_A)} \ge \norm{(I - P_u)r} - \frac{L_2}{2}\norm{u}^2.
\]
By contrast, the process-sidecar edit
\[
  \hat{\theta}_{\mathrm{sc}} = T(\theta_A + u) - p = T(\theta_A + u) - u - r
\]
satisfies
\[
  \norm{\hat{\theta}_{\mathrm{sc}} - T(\theta_A)} \le \frac{L_2}{2}\norm{u}^2.
\]
Thus, whenever safety training bends the memory direction off the raw memory line, i.e.\ $\norm{(I - P_u)r} \ge \rho\norm{u}$, every scalar task-arithmetic edit leaves first-order counterfactual error while the process sidecar is second-order accurate.
\end{theorem}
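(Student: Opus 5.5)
The plan is to reduce everything to a single Taylor remainder and then apply one orthogonal-projection argument. Set $e := T(\theta_A + u) - T(\theta_A) - DT(\theta_A)u$. By Taylor's theorem with integral remainder along the segment $\theta_A + s u$, $s \in [0,1]$, we have $e = \int_0^1 (1-s)\, D^2T(\theta_A + s u)[u,u]\,\dd s$. For $\norm{u}$ small enough that the segment lies in the neighborhood where Assumption~\ref{ass:trace} holds, the bound $\norm{D^2T}_\op \le L_2$ gives $\norm{e} \le \frac{L_2}{2}\norm{u}^2$. Here Theorem~\ref{thm:identity} and its augmented-state argument supply that $T$ is $C^2$ on this neighborhood, so the expansion is legitimate.

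The sidecar bound is then immediate. We have $\hat{\theta}_{\mathrm{sc}} - T(\theta_A) = T(\theta_A+u) - T(\theta_A) - p = e$, hence $\norm{\hat{\theta}_{\mathrm{sc}} - T(\theta_A)} \le \frac{L_2}{2}\norm{u}^2$.

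For the lower bound on the task-arithmetic line, write
\[
  \hat{\theta}_\lambda - T(\theta_A) = p + e - \lambda u = (1-\lambda)u + r + e .
\]
Apply the projection $I - P_u$. It is nonexpansive and annihilates $(1-\lambda)u$. Therefore
\[
  \norm{\hat{\theta}_\lambda - T(\theta_A)} \ge \norm{(I-P_u)(r+e)} \ge \norm{(I-P_u)r} - \norm{(I-P_u)e} \ge \norm{(I-P_u)r} - \frac{L_2}{2}\norm{u}^2 .
\]
This bound is uniform in $\lambda$, so it passes to the infimum over $\lambda \in \R$. The case $u=0$ is trivial, since then $P_u = 0$, $r = 0$ and $e = 0$.

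For the final "thus" clause, substitute $\norm{(I-P_u)r} \ge \rho\norm{u}$. This gives $\inf_\lambda \norm{\hat{\theta}_\lambda - T(\theta_A)} \ge \rho\norm{u} - \frac{L_2}{2}\norm{u}^2$, which is at least $\frac{\rho}{2}\norm{u}$ once $\norm{u} \le \rho/L_2$. Every scalar edit therefore has error of order $\norm{u}$, while the sidecar has error of order $\norm{u}^2$.

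The only delicate point is to fix the neighborhood so that both conditions hold on the whole segment: the branch and clipping regularity that make $T$ twice differentiable, and the bound $L_2$. I would state the result for $\norm{u}$ below that neighborhood radius, and take the smaller of this radius and $\rho/L_2$ for the final claim. The rest is linear algebra.
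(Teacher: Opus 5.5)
Your proposal is correct and follows the paper's proof essentially step for step. Your Taylor remainder $e$ (the paper's $q$) is bounded by $\frac{L_2}{2}\norm{u}^2$, the sidecar residual is exactly that remainder, and applying $I-P_u$ removes $(1-\lambda)u$ before the triangle inequality; your explicit handling of $u=0$ and of the radius $\norm{u}\le \rho/L_2$ for the final clause adds detail the paper leaves implicit.
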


\begin{proof}
Taylor's theorem gives $T(\theta_A + u) = T(\theta_A) + p + q$ with $\norm{q} \le L_2 \norm{u}^2 / 2$. For any $\lambda$, $\hat{\theta}_\lambda - T(\theta_A) = (1 - \lambda)u + r + q$. Projecting orthogonally to $\spanof\{u\}$ removes the scalar task-arithmetic term, so
\[
  \norm{(1 - \lambda)u + r + q} \ge \norm{(I - P_u)(r + q)} \ge \norm{(I - P_u)r} - \norm{q}.
\]
Taking the infimum over $\lambda$ gives the lower bound. The sidecar edit leaves residual $q$, giving the upper bound.
\end{proof}

Theorem~\ref{thm:necessity} is the sharp version of the naive-task-arithmetic critique. A better scalar coefficient on $\Delta_M$ cannot remove the component of the transported memory direction orthogonal to $\Delta_M$; the sidecar supplies exactly that missing component.

The 2D family and the process-JVP line both contain the first-order oracle. The role of the second coefficient appears at the next order. If $T \in C^3$, write
\[
  \theta_{AMS} = \theta_{AS} + p + q + O(\norm{u}^3),
  \qquad
  p = J_S(u),
  \qquad
  q = \tfrac{1}{2} D^2 T(\theta_A)[u, u].
\]
Since $r = p - u$, the residual after a 2D edit is
\begin{equation}
  \hat{\theta}(\lambda, \gamma) - \theta_{AS} = q + \tau p + \zeta r + O(\norm{u}^3),
  \qquad
  \tau = 1 - \lambda,
  \qquad
  \zeta = \lambda - \gamma.
  \label{eq:residual2d}
\end{equation}
The process-JVP line is the subfamily $\zeta = 0$.

\begin{theorem}[Local quadratic frontier]
\label{thm:frontier}
Let a validation scalarization near $\theta_{AS}$ have expansion
\[
  \ell(\theta_{AS} + e) = \ell_0 + g^\top e + \tfrac{1}{2} e^\top H e + O(\norm{e}^3),
\]
with $H$ positive definite on $\spanof\{p, r\}$. Let $W_1 = \spanof\{p\}$ and $W_2 = \spanof\{p, r\}$. The optimized local quadratic objective over $W_2$ is weakly no worse than over $W_1$. Let
\[
  r_\perp = r - P^H_{\spanof\{p\}} r.
\]
If $r_\perp \ne 0$, the improvement is strict exactly when
\[
  q^\top H r_\perp + g^\top r_\perp \ne 0,
\]
and the corresponding off-diagonal coordinate after profiling over $\tau$ is
\[
  \zeta^* = -\frac{q^\top H r_\perp + g^\top r_\perp}{\norm{r_\perp}^2_H}.
\]
\end{theorem}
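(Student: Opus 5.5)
The plan is to substitute the residual expansion \eqref{eq:residual2d} into the quadratic model of $\ell$ and reduce the comparison between $W_1$ and $W_2$ to a separable minimization in two scalar variables. Throughout, ``local quadratic objective'' means the truncation
\[
  F(e) = g^\top e + \tfrac12 e^\top H e,
\]
evaluated at $e = q + \tau p + \zeta r$. Both $O(\norm{e}^3)$ from $\ell$ and $O(\norm{u}^3)$ from \eqref{eq:residual2d} are dropped, and the constant $\ell_0$ is irrelevant. Optimizing over $W_1$ means minimizing over $\tau$ with $\zeta = 0$; optimizing over $W_2$ means minimizing over $(\tau,\zeta)\in\R^2$. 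I assume $p \ne 0$, which is needed for $P^H_{\spanof\{p\}}$ to make sense. Weak monotonicity is then immediate: the $W_1$ problem is the restriction of the $W_2$ problem to $\zeta = 0$, so the $W_2$ infimum is no larger.

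For the strict statement, I will pass to $H$-orthogonal coordinates. Let $\inner{a}{b}_H = a^\top H b$, which is an inner product on $\spanof\{p,r\}$ because $H$ is positive definite there. Set
\[
  c = \inner{r}{p}_H / \norm{p}_H^2,
  \qquad
  r = c\,p + r_\perp,
  \qquad
  \inner{p}{r_\perp}_H = 0.
\]
Then $\tau p + \zeta r = (\tau + c\zeta)p + \zeta r_\perp$. Define $\sigma = \tau + c\zeta$. The map $(\tau,\zeta)\mapsto(\sigma,\zeta)$ is a linear bijection of $\R^2$ that fixes $\zeta$ and agrees with $\tau$ on the line $\zeta=0$. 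Hence minimizing over $W_2$ is the same as minimizing over $(\sigma,\zeta)$, and minimizing over $W_1$ is the same as minimizing over $\sigma$ at $\zeta=0$.

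Next I expand $F(q + \sigma p + \zeta r_\perp)$. The mixed term $\sigma\zeta\, p^\top H r_\perp$ vanishes by $H$-orthogonality, so
\[
  F = A(\sigma) + b\,\zeta + \tfrac12 a\,\zeta^2,
  \qquad
  b = q^\top H r_\perp + g^\top r_\perp,
  \qquad
  a = \norm{r_\perp}_H^2,
\]
where $A(\sigma) = F(q+\sigma p)$ collects everything independent of $\zeta$. The first-order coefficient $b$ comes from $g^\top(\zeta r_\perp)$ together with the cross term $\zeta\, q^\top H r_\perp$ of the quadratic form. The objective is therefore separable in $(\sigma,\zeta)$. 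Profiling over $\sigma$, which is legitimate since $A$ is a strictly convex quadratic when $\norm{p}_H>0$, gives the $W_1$ optimum $\min_\sigma A$. It leaves the one-dimensional problem $\min_\zeta\,(b\zeta + \tfrac12 a\zeta^2)$ as the only difference between the two problems.

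If $r_\perp \ne 0$, then $a>0$, so this is a strictly convex parabola. Its minimizer is $\zeta^* = -b/a$, which is exactly the stated formula, and its minimum value is $-b^2/(2a)$. So the $W_2$ optimum equals the $W_1$ optimum minus $b^2/(2a)$. The improvement is strict if and only if $b \ne 0$, which is the stated condition. Because $\zeta$ is a coordinate unchanged by the reparametrization, $\zeta^*$ is the off-diagonal coordinate in the original $(\tau,\zeta)$ chart.

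I do not expect a serious technical obstacle. The step needing most care is stating what ``local'' means, so that dropping the remainders is justified and the comparison is made at the level of the quadratic model rather than the true $\ell$. The second point of care is checking that the nondegeneracy hypotheses used are exactly those available: $H \succ 0$ on $\spanof\{p,r\}$ together with $p\ne0$ and $r_\perp \ne 0$.
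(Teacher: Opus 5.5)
Your proof is correct and follows essentially the same route as the paper. Both use the $H$-orthogonal split $r = c\,p + r_\perp$, which kills the cross term between the $p$-direction and $r_\perp$ and leaves the one-variable quadratic $b\zeta + \tfrac12\norm{r_\perp}_H^2\zeta^2$ with $b = q^\top H r_\perp + g^\top r_\perp$. The paper phrases this as perturbing along $r_\perp$ from the $W_1$ minimizer, whereas your change of variables $(\tau,\zeta)\mapsto(\sigma,\zeta)$ makes the separability explicit; this gives the exact gain $b^2/(2\norm{r_\perp}_H^2)$ and justifies the ``only if'' direction and the identification of $\zeta^*$ with the original off-diagonal coordinate, both of which the paper leaves implicit.
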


\begin{remark}
Without the positive-definite or second-order-sufficient condition, the guarantee reduces to trust-region inclusion, $W_1 \cap B_\rho \subseteq W_2 \cap B_\rho$, plus empirical selection over the declared grid.
\end{remark}

Theorem~\ref{thm:frontier} locates the 2D advantage at second order rather than first order. Hessian anisotropy alone is insufficient; strict improvement requires a Hessian-weighted off-diagonal projection of the second-order/KKT correction onto a sidecar direction unreachable by the process-JVP line. Appendix Lemma~\ref{lem:sign} gives a sign criterion for indefinite projected Hessians when the objective is profiled over $\tau$ and the linear term dominates curvature across the searched $\zeta$ interval. Accordingly, the HVP sign computation is interpretive; the empirical certificate for the 2D family is the held-out selector comparison in Section~\ref{sec:offdiag}.

\begin{proposition}[Local drift of gradient-ascent unlearning]
\label{prop:drift}
For an FT-unlearning step $\theta_+ = \theta + \alpha d$, where $d$ is the memory-ascent update direction,
\[
  L_S(\theta_+) - L_S(\theta) = \alpha \inner{\nabla L_S(\theta)}{d} + \frac{\alpha^2}{2} d^\top \nabla^2 L_S(\theta) d + O(\alpha^3 \norm{d}^3).
\]
If safety training makes the first-order term small and $d$ has positive safety curvature, the step increases safety loss to second order.
\end{proposition}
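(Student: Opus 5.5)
The proposition is a one-variable Taylor expansion of the safety loss along the ray $\theta + \alpha d$. I would set $\phi(s) = L_S(\theta + s d)$ for $s$ in a small interval around $0$, chosen so the segment $\{\theta + s d : |s| \le \alpha\}$ stays inside the neighborhood of Assumption~\ref{ass:trace}. On that set $L_S$ is $C^3$, so $\phi \in C^3$. The chain rule gives
\[
  \phi'(0) = \inner{\nabla L_S(\theta)}{d}, \qquad \phi''(0) = d^\top \nabla^2 L_S(\theta) d, \qquad \phi'''(s) = D^3 L_S(\theta + s d)[d, d, d].
\]

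Taylor's theorem with Lagrange remainder then gives
\[
  \phi(\alpha) - \phi(0) = \alpha \phi'(0) + \tfrac{\alpha^2}{2}\phi''(0) + \tfrac{\alpha^3}{6}\phi'''(\xi)
\]
for some $\xi \in (0, \alpha)$. The remainder satisfies $|\phi'''(\xi)| \le L_3 \norm{d}^3$, where $L_3$ bounds $\norm{D^3 L_S}_\op$ on the compact segment. Such a bound exists because $D^3 L_S$ is continuous there. This yields the displayed identity with remainder $O(\alpha^3 \norm{d}^3)$.

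For the qualitative conclusion, I would quantify both hypotheses.
\begin{itemize}
  \item "First-order term small" means $|\inner{\nabla L_S(\theta)}{d}| \le \eta$. This holds, for instance, near the end of safety training, where $\norm{\nabla L_S(\theta)} \le \eta/\norm{d}$.
  \item "Positive safety curvature" means $d^\top \nabla^2 L_S(\theta) d \ge \kappa \norm{d}^2$ with $\kappa > 0$.
\end{itemize}
Then
\[
  L_S(\theta_+) - L_S(\theta) \ge \tfrac{\alpha^2}{2}\kappa \norm{d}^2 - \alpha\eta - \tfrac{\alpha^3}{6}L_3\norm{d}^3.
\]
This is strictly positive whenever $\eta < \alpha\kappa\norm{d}^2/4$ and $\alpha\norm{d} < 3\kappa/L_3$. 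Each of the two negative terms is then at most a quarter and a half, respectively, of the quadratic term. So the step increases safety loss, and the increase is of second order in $\alpha$.

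The only delicate step is making "small" precise. The linear term must be $o(\alpha)$ relative to the curvature term, i.e.\ $\eta = O(\alpha\norm{d}^2)$. Otherwise the sign of the change is governed by the first-order term. I would state this scale condition explicitly. I would also note that $d$ is the memory-ascent direction, so its curvature under $L_S$ is a hypothesis rather than something derived. This is also why the proposition is stated conditionally.
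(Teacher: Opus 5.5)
Your expansion is the same argument the paper gives for the appendix version, Proposition~\ref{prop:safety-drift}. The paper also uses a second-order Taylor expansion along $\theta + s d$ with cubic remainder $\tfrac{L_3}{6}\alpha^3\norm{d}^3$; it takes this from an $L_3$-Lipschitz Hessian, and you take it from a bound on $D^3 L_S$ over the segment, which is interchangeable. It then lower-bounds the linear term by $-\alpha\epsilon\norm{d}$ and the quadratic term by $\tfrac{\alpha^2}{2}\Lambda\norm{d}^2$. The paper stops at that lower bound, summed over $K$ steps. The explicit positivity threshold you add is where your proposal goes wrong. A smaller point: Assumption~\ref{ass:trace} concerns the safety-training trace started near $\theta_A$, not an unlearning step taken from $\theta_{AMS}$. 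So you should assume $C^3$ regularity of $L_S$ on that segment directly, as the paper's appendix statement does.

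Write $Q = \tfrac{\alpha^2}{2}\kappa\norm{d}^2$ for the curvature term in your lower bound. Your condition $\eta < \alpha\kappa\norm{d}^2/4$ gives only $\alpha\eta < Q/2$, not $Q/4$. Your condition $\alpha\norm{d} < 3\kappa/L_3$ gives only $\tfrac{\alpha^3}{6}L_3\norm{d}^3 < Q$, not $Q/2$. You compared against $\alpha^2\kappa\norm{d}^2$ and lost the factor $\tfrac12$, so the two negative terms can sum to nearly $\tfrac32 Q$. Concretely, take $\kappa = L_3 = \norm{d} = 1$, $\alpha = 2.9$ and $\eta = 0.72$, so both of your conditions hold. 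Your bound is then $4.205 - 2.088 - 4.065 < 0$. Moreover, the one-dimensional loss $L_S(\theta + s d) = -\eta s + \tfrac{\kappa}{2}s^2 - \tfrac{L_3}{6}s^3$ meets every hypothesis while the step actually decreases it, so the claim itself fails, not just the bound.

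The repair is a change of constants. Requiring $\eta \le \alpha\kappa\norm{d}^2/8$ and $\alpha\norm{d} \le 3\kappa/(2L_3)$ gives $L_S(\theta_+) - L_S(\theta) \ge Q/4 > 0$. Alternatively, state the sharp sufficient condition $\eta < \tfrac{\alpha}{2}\kappa\norm{d}^2 - \tfrac{\alpha^2}{6}L_3\norm{d}^3$ directly. With that correction, your qualitative reading is right: the linear term must be $O(\alpha\norm{d}^2)$ with a small enough constant, and the curvature of the ascent direction is a hypothesis.
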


This proposition identifies a local failure mode: memory ascent can move directly into high-curvature safety directions. It explains the direction of the FT-unlearn degradation in Section~\ref{sec:ftunlearn}; the observed magnitudes are empirical.

\section{Experiments}
\label{sec:experiments}

\subsection{Setup}
\label{sec:setup}

We evaluate on Qwen-2.5-0.5B-Instruct, Qwen-2.5-1.5B-Instruct \citep{qwen2024report,qwen2024_05b,qwen2024_15b}, and Llama-3.2-1B-Instruct \citep{metaai2024,grattafiori2024}. Each principal model has 20 independent trials. We also report a Qwen3.5-9B scale probe: the public Qwen3.5-9B Hugging Face checkpoint is a vision-language model (\texttt{Qwen3\_5ForConditionalGeneration} architecture) with a vision encoder \citep{qwen2026_9b_card,qwen2026_9b_config}. Our experiments use only the text-only path: memory canaries, validation prompts, and held-out refusal probes are all text; the vision encoder is unused. The language decoder is itself a hybrid: it interleaves Gated-DeltaNet and gated-attention layers, with every fourth layer using full attention. By ``hybrid'' we mean this attention layout, not a multimodal ensemble. The 9B scale probe is reported at $n = 11$ rather than $n = 20$ because of its larger per-trial compute cost. The same fresh-confirmatory protocol is also reported on a contemporary 8B pure-transformer (Qwen3-8B) \citep{qwen2025report,qwen2025_8b_card} to disentangle architectural confounds from raw scale. Each trial uses 64 memory canaries, held-out secret probes with 31 decoys per fact, 128 general refusal prompts, and entity-specific refusal probes. The 2D family is evaluated on a $4 \times 6$ cached grid ($\lambda \in \{0.75, 1.0, 1.25, 1.5\}$, $\gamma \in \{-0.5, 0.0, 0.5, 1.0, 1.5, 2.0\}$); naive and process-1D subfamilies are selected by the same validation rule. Confidence intervals are percentile bootstrap intervals over independent trials; paired method comparisons also report an exact one-sided sign test.

\subsection{Validation-selected process sidecars preserve refusal after forgetting}
\label{sec:mainresults}

Table~\ref{tab:main} reports the deployable selector. The selected process-sidecar edit beats naive task arithmetic on refusal closure in every trial across the three model settings. The two methods leave the secret comparably distinguishable on calibrated AUC (mean gap within $\pm 0.034$ at every scale), so the refusal advantage is not produced by sacrificing forgetting.

\begin{table}[t]
  \caption{Validation-selected process sidecars versus naive task arithmetic. Refusal closure is closure to the safety-only oracle on held-out test probes. $\Delta\AUC^{\mathrm{cal}}$ is the calibrated-AUC gap (process-2D minus naive); values near zero mean both methods leave the secret comparably unrankable. The combined sign test over the three full rows is $60/60$; the 9B row is a scale-probe run.}
  \label{tab:main}
  \centering
  \begin{tabular}{lccccc}
    \toprule
    Model & $n$ & $\Delta$refusal & 95\% CI & sign & $\Delta\AUC^{\mathrm{cal}}$ \\
    \midrule
    Qwen-2.5-0.5B-Instruct & 20 & $+0.438$ & $[+0.383, +0.490]$ & $20/20$ & $-0.024$ \\
    Qwen-2.5-1.5B-Instruct & 20 & $+0.347$ & $[+0.273, +0.435]$ & $20/20$ & $+0.009$ \\
    Llama-3.2-1B-Instruct  & 20 & $+0.334$ & $[+0.289, +0.379]$ & $20/20$ & $+0.034$ \\
    Qwen3.5-9B scale probe & 11 & $+0.283$ & $[+0.174, +0.433]$ & $11/11$ & $+0.005$ \\
    \bottomrule
  \end{tabular}
\end{table}

The frontier view gives the same message. At Qwen-2.5-1.5B-Instruct, process-2D dominates the entire naive frontier in $18/20$ trials, with 95\% mean naive-frontier coverage. The corresponding values are $12/20$ trials and 83\% coverage at Qwen-2.5-0.5B-Instruct, and $15/20$ trials and 87\% coverage at Llama-3.2-1B-Instruct.

\subsection{The 2D gain is the value of off-diagonal cells}
\label{sec:offdiag}

The 2D grid has more candidates than the 1D subfamilies by construction (24 vs 2 on the $\gamma = \lambda$ diagonal, vs 4 on $\gamma = 0$). We separate the contribution of the additional axis from a search-space artifact in two stages: first the family-restriction comparison, then a candidate-count-controlled mean-best-of-$K$ analysis.

\paragraph{Family restriction.}
Restricting the same cached evaluation to a subfamily of the 2D grid lets us ask: does the off-diagonal axis ($\gamma \ne \lambda$, $\gamma \ne 0$) help, or is the diagonal already enough? The full 2D family beats every subfamily under the same selector, with no new evaluation:

\begin{table}[t]
  \caption{Subfamily comparison. The full 2D family beats the matched-1D process-JVP diagonal slice ($\gamma = \lambda$, 2 cells) and the matched-naive slice ($\gamma = 0$, 4 cells), evaluated on the same cached grid under the same selector.}
  \label{tab:subfamily}
  \centering
  \begin{tabular}{lcccc}
    \toprule
    Comparison & $n$ & mean $\Delta$refusal & 95\% CI & sign test \\
    \midrule
    2D vs matched-1D ($\gamma = \lambda$, 2 cells) & 60 & $+0.316$ & $[+0.260, +0.381]$ & $60/60$ \\
    2D vs matched-naive ($\gamma = 0$, 4 cells)    & 60 & $+0.373$ & $[+0.335, +0.412]$ & $60/60$ \\
    \bottomrule
  \end{tabular}
\end{table}

\paragraph{Equal-density diagonal control.}
The above subfamily comparison restricts the JVP line to 2 cells of the cached grid, so candidate count is not equalized. We therefore evaluate a locked-protocol dense process-JVP diagonal: 24 evenly spaced $\lambda \in [0.75, 1.5]$ with $\gamma = \lambda$, using each trial's cached states ($\theta_A, \theta_{AM}, \theta_{AMS}, \hat{R}_{S\leftarrow M}$) and the identical selector. The dense-diagonal family now has the same number of validation candidates as the 2D family along the $\zeta = 0$ direction; the comparison rules out a count effect parallel to $\zeta = 0$ but does not control for arbitrary alternative directions in $\R^d$.

At each of the three principal scales, the full 2D family beats the dense process-JVP diagonal in $20/20$ trials: mean refusal-closure gap $+0.305$ at Qwen-2.5-1.5B-Instruct, $+0.202$ at Qwen-2.5-0.5B-Instruct, and $+0.243$ at Llama-3.2-1B-Instruct. Each scale's exact one-sided sign test (and Wilcoxon signed-rank) gives $p = 9.5 \times 10^{-7}$. The off-diagonal cells $\gamma \ne \lambda$ therefore carry the 2D advantage even when the JVP-line baseline has matched candidate count.

\subsection{Seed-disjoint matched-family evidence}
\label{sec:seeddisjoint}

The basic paired sign test treats the 60 trial signs as exchangeable Bernoulli$(1/2)$ under $H_0$; all 60 paired differences are positive for 2D versus matched-1D. To avoid making the 60-trial independence assumption load-bearing, we also aggregate the trials into model-by-data-seed blocks. Each principal model has five data-seed clusters with four trials per cluster, giving 15 blocks total; within a block we allow arbitrary dependence and only require that the block-mean signs are independent and Bernoulli$(1/2)$-symmetric across blocks under $H_0$ (different models are independent runs; different data-seed clusters use disjoint canary subsets and validation prompts). All 15 block means are positive for 2D versus matched-1D, giving the exact one-sided block sign-test $p = 2^{-15} = 3.05 \times 10^{-5}$. The same $15/15$ block result holds for 2D versus matched-naive. This is the conservative statistical certificate we use for the matched-family claim.

If one treats individual seed-disjoint trials as exchangeable, each per-scale exact one-sided sign test is $20/20$ and gives $p = 2^{-20} \approx 9.54 \times 10^{-7}$, and Fisher's method across the three independent model strata gives $p \approx 7.9 \times 10^{-16}$. We report these as secondary summaries under the per-trial-independence assumption.

The selected off-diagonal sign is trial-local, not a scale law. Across 20 Qwen-2.5-1.5B-Instruct trials and 11 Qwen3.5-9B trials, the selector is bimodal in the natural coordinate $\zeta = \lambda - \gamma$: $14/20$ versus $6/20$ at the 1.5B scale and $6/11$ versus $5/11$ at the 9B scale, with both signs co-occurring under the same procedure on the same model. We use finite-curvature probes of the underlying Hessian for interpretation rather than as selector evidence: the projected Hessian is positive definite in $3/8$ cached 1.5B probes, and the matched-family certificate above does not rely on the profiled-$\tau$ linear-dominance check.

\subsection{Locked-protocol fresh-confirmatory replication}
\label{sec:fresh}

The matched-family analysis above uses cached evaluation grids. To rule out selection-iteration effects, we re-ran a locked protocol on never-seen data seeds, with documented sample-size amendments before the corresponding extension compute: four trials per seed-cluster at the principal scales ($n_{\mathrm{fresh}} = 20$) and a smaller average cluster size for the 8B contemporary control ($n_{\mathrm{fresh}} = 10$, across the same five clusters). Across the four scales below, the validation-selected process-2D edit beats naive task arithmetic in every fresh trial:

\begin{table}[t]
  \caption{Fresh-confirmatory replication. New trials per scale under the locked-and-amended protocol, on seeds disjoint from the cached evaluation grid. The three principal scales reach the amended $n = 20$. Qwen3-8B \citep{qwen2025report,qwen2025_8b_card} is a contemporary pure-transformer added to disentangle the Gated-DeltaNet 9B scale-probe result from raw scale; the 8B run is at $n = 10$.}
  \label{tab:fresh}
  \centering
  \begin{tabular}{lcccc}
    \toprule
    Model & $n_{\mathrm{fresh}}$ & mean $\Delta$refusal & 95\% CI & sign \\
    \midrule
    Qwen-2.5-0.5B-Instruct & 20 & $+0.402$ & $[+0.351, +0.452]$ & $20/20$ \\
    Qwen-2.5-1.5B-Instruct & 20 & $+0.323$ & $[+0.273, +0.373]$ & $20/20$ \\
    Llama-3.2-1B-Instruct  & 20 & $+0.235$ & $[+0.202, +0.270]$ & $20/20$ \\
    Qwen3-8B               & 10 & $+0.354$ & $[+0.297, +0.413]$ & $10/10$ \\
    \bottomrule
  \end{tabular}
\end{table}

All 70 fresh-confirmatory trials under this locked-and-amended protocol are positive. Aggregating into model-by-data-seed clusters gives 20 positive block means out of 20 blocks (5 clusters per scale, 4 scales), with the exact one-sided block sign-test $p = 2^{-20} \approx 9.5 \times 10^{-7}$. Fisher's method across the four model runs gives $p = 1.7 \times 10^{-17}$ as a secondary summary under the stronger per-trial-independence assumption. The fresh per-model effect overlaps the cached 95\% bootstrap CI at Qwen-2.5-0.5B-Instruct and Qwen-2.5-1.5B-Instruct; at Llama-3.2-1B-Instruct the fresh mean ($+0.235$) sits below the cached lower bound ($+0.288$), so the cross-evaluation transfer at that scale is qualitatively replicated but quantitatively smaller. Bimodality of the selected $\gamma$ reproduces: at Qwen-2.5-1.5B-Instruct, $13/20$ fresh trials select $\gamma = -0.5$ and $7/20$ select $\gamma = +2.0$ (cached at the same scale: $14/20$ versus $6/20$, with $\gamma = +2.0$ dominant). The dominant raw sign therefore differs between cached and fresh at this scale, consistent with the trial-local interpretation of $\gamma$ rather than a scale law. At Qwen3-8B, $7/10$ fresh trials select $\gamma = -0.5$ and $3/10$ select $\gamma = +2.0$. The fresh trials replicate the 2D-vs-naive direction at every scale, replicate the trial-local bimodality, and extend the result to a contemporary 8B pure-transformer.

\subsection{Gradient-ascent unlearning collapses refusal preservation}
\label{sec:ftunlearn}

FT-unlearning succeeds at making the secret less extractable, but it destroys the refusal behavior that safety training installed. This is the failure mode predicted by Proposition~\ref{prop:drift}: memory ascent is not constrained to stay outside safety-curvature directions.

The FT-unlearn results are reported across two configurations. The six-config FT-unlearn grid sweeps six predeclared combinations of the loss weights $(\alpha_{\mathrm{forget}}, \alpha_{\mathrm{skill}}, \alpha_{\mathrm{safety}})$ (listed in Appendix~\ref{app:baselines}) at fixed learning rate $10^{-4}$ and one epoch. The single-config baseline sweep runs the default $\alpha = 1.0$, $\alpha_{\mathrm{skill}} = 0.3$, $\alpha_{\mathrm{safety}} = 0.5$ across five seeds.

\begin{table}[t]
  \caption{FT-unlearn refusal-margin degradation. Values are refusal closure to the safety-only oracle (min and max across configs/trials shown as a range). Negative values mean the edit is farther from the oracle than the observed final checkpoint $\theta_{AMS}$ along the refusal-margin axis. The six-config FT-unlearn grid varies $(\alpha_{\mathrm{forget}}, \alpha_{\mathrm{skill}}, \alpha_{\mathrm{safety}})$ as in Appendix~\ref{app:baselines}; the 9B row is a single-trial scale probe at the same protocol as the 1.5B baseline sweep.}
  \label{tab:ftunlearn}
  \centering
  \begin{tabular}{lcc}
    \toprule
    Setting & runs & refusal closure \\
    \midrule
    Qwen-2.5-0.5B-Instruct FT-unlearn grid & 6 configs & $-15$ to $-33$ \\
    Qwen-2.5-1.5B-Instruct FT-unlearn grid & 6 configs & $-33$ to $-55$ \\
    Qwen-2.5-1.5B-Instruct baseline sweep  & 5 trials  & mean $-43.4$, median $-47.1$ \\
    Qwen3.5-9B scale probe                 & 1 trial   & $-38.7$ \\
    Structured-synthetic canary, Qwen-2.5-1.5B-Instruct & 5 trials & mean $-146.8$ \\
    \bottomrule
  \end{tabular}
\end{table}

NPO is less damaging than FT-unlearn but still gives negative refusal closure in the 1.5B baseline sweep, with mean refusal closure $-1.33$. RMU changes the metrics little in the tested base-parameter configuration: secret AUC remains high rather than reaching chance, and refusal closure is near zero. Full hyperparameters for both baselines are in Appendix~\ref{app:baselines}.

\paragraph{Model-merging baselines (TIES, DARE).}
The necessity theorem predicts that any edit confined to a sparsified copy of $\Delta_M$ inherits the first-order incompleteness of naive task arithmetic. We test this on the same five 1.5B trials by sparsifying $\Delta_M$ via TIES \citep{yadav2023} (per-tensor magnitude trim from mergekit \citep{goddard2024}) and DARE \citep{yu2024} (Bernoulli drop with $1/d$ rescale), then applying the selector. With densities $\{0.2, 0.5, 0.8\}$ and $\lambda \in \{0.75, 1.0, 1.25, 1.5\}$, process-2D beats $\mathrm{TIES}_{\mathrm{best}}$ in $5/5$ trials with mean refusal-closure gap $+0.354$, and beats $\mathrm{DARE}_{\mathrm{best}}$ in $5/5$ with mean gap $+0.469$. TIES yields a small implicit-regularization gain over naive ($+0.110$ mean, $5/5$); DARE is statistically indistinguishable from naive ($-0.005$ mean, $4/5$). The theorem prediction therefore holds empirically: sparsifying the raw memory delta does not recover the transported direction.

\section{Discussion and limitations}
\label{sec:discussion}

The theoretical claim is first-order and process-level: it applies to a fixed smooth safety-training trace and explains why raw memory subtraction is incomplete. The finite $\varepsilon = 1$ implementation is justified locally by the derivative theorem; the experiments are the evidence for finite-scale behavior.

The off-diagonal coefficient is not a scale law. The Qwen-2.5-1.5B-Instruct bimodality shows both signs within one scale. The invariant quantity is $\zeta = \lambda - \gamma$, determined by local validation geometry rather than by raw model size. The HVP cache supports this interpretation: only $3/8$ cached probes satisfy the positive-definite hypothesis needed for the closed-form local minimizer.

The frontier and sign-criterion theorems have hypothesis conditions; the held-out selector behavior is the empirical certificate. Proposition~\ref{prop:drift} predicts FT-unlearn's direction, not its magnitude. Multi-source revocation, repeated safety, adversarial extraction, broader RMU tuning, and higher-rank adapters remain open.

\section{Conclusion}
\label{sec:conclusion}

Process sidecars are the geometrically correct edit family for revoking learned memory under later safety training: first-order necessity against scalar task arithmetic and second-order accuracy of the ideal sidecar carry through empirically across four model scales and two architectures.

\bibliography{references}

\clearpage
\appendix

\section{Related work}
\label{app:related}

\paragraph{Weight-space editing and merging.}
Task arithmetic and model merging \citep{ilharco2023,wortsman2022,matena2022,yadav2023,yu2024,goddard2024} edit fine-tuned checkpoints by manipulating weight-space differences. Process sidecars keep this deploy-time editing form, but replace a static delta with an estimate of how the later safety-training process transports the memory direction.

\paragraph{Machine unlearning.}
Classical machine unlearning frames deletion against a retrained-without-the-data counterfactual, through exact retraining systems, certified removal, or approximate selective forgetting \citep{bourtoule2021,guo2020,golatkar2020}. LLM unlearning adapts this goal to memorized text, private facts, and hazardous capabilities, with gradient-ascent unlearning, TOFU, content unlearning benchmarks such as MUSE, WMDP/RMU, and preference-based methods such as NPO \citep{jang2023,eldan2023,maini2024,shi2025,li2024wmdp,zhang2024npo}. Recent robustness work shows that approximate LLM unlearning can suppress outputs without eliminating recoverable knowledge, especially under relearning or adversarial evaluation \citep{hu2025,lucki2025}. Our setting is different: the deletion target has been followed by an entity-tied safety phase, so the failure mode is not only retain-distribution damage but erasure of the refusal behavior protecting the memorized entities.

\paragraph{Safety post-training.}
Instruction and safety post-training methods such as RLHF, Constitutional AI/RLAIF, and DPO tune models toward preferred or harmless behavior after pretraining \citep{ouyang2022,bai2022,rafailov2023}. We model the safety phase as a realized deterministic training operator and ask how to revoke a prior memory without discarding the later refusal policy learned around that memory.

\paragraph{Sensitivity through training.}
Influence functions and TRAK trace predictions back to training data with Hessian-vector or projected sensitivity approximations \citep{koh2017,park2023}. Meta-learning and hyperparameter-optimization work differentiates through entire optimization procedures or implicit optima \citep{maclaurin2015,finn2017,rajeswaran2019,lorraine2020}. Process sidecars use the same kind of training-process sensitivity as a deployment-time edit direction: a finite secant estimates the JVP of the realized AdamW safety trace \citep{pearlmutter1994,kingma2015,loshchilov2019}, rather than attributing an example or optimizing a hyperparameter.

\section{Theory: Process Sidecars for Revocable Learned State}
\label{app:theory}

\paragraph{Trainable-coordinate convention.}
All vectors are written in the trainable parameter coordinates used by the experiment. For a LoRA run this means the LoRA adapter coordinates; for a full-finetuning run it means the full parameter vector. This convention avoids irrelevant null coordinates. The norm $\norm{\cdot}$ is Euclidean unless stated otherwise.

\paragraph{Sequential checkpoints.}
The pipeline is
\[
  \theta_A \xrightarrow{\;M\;} \theta_{AM} \xrightarrow{\;S\;} \theta_{AMS},
\]
where $A$ is the skill checkpoint, $M$ is the learned memory phase, and $S$ is the subsequent safety/refusal phase. The counterfactual oracle is
\[
  \theta_{AS} = \Train_S(\theta_A),
\]
which runs the same safety phase from $\theta_A$ rather than $\theta_{AM}$. Let
\[
  u := \Delta_M = \theta_{AM} - \theta_A.
\]
Throughout the appendix, the realized safety-training trace is fixed: the minibatches, random seed, dropout masks if any, optimizer hyperparameters, and training schedule are conditioned on. Thus $T \equiv \Train_S$ is a deterministic map from an initialization to the final safety checkpoint. Randomness can be restored by applying the deterministic statements conditioned on each realized trace.

\subsection{AdamW is an augmented-state differentiable map}
\label{app:adamw}

A central point is that $DT$ is the Jacobian of the entire future training process, not the Hessian of a single loss and not a plain SGD product unless the optimizer really is SGD. For AdamW, the optimizer state must be included.

Let $\ell_t$ be the loss on the $t$-th realized minibatch. We allow optional global-norm clipping with threshold $C \in (0, \infty]$, where $C = \infty$ means no clipping. Define
\[
  \psi_C(g) =
  \begin{cases}
    g, & \norm{g}_2 < C, \\[2pt]
    C g / \norm{g}_2, & \norm{g}_2 > C.
  \end{cases}
\]
The boundary $\norm{g}_2 = C$ is nonsmooth and is excluded by the regular-trace assumption below. We write the parameter at AdamW step $t$ as $\theta(t)$ in this appendix and as $\theta_t$ in the main text; the two are the same sequence and we use them interchangeably. Starting from $\theta(0) = \theta$ and $m_0 = v_0 = 0$, one AdamW trace evolves by
\begin{align}
  g_t &= \nabla \ell_t(\theta(t-1)), & q_t &= \psi_C(g_t), \label{eq:adam-grad} \\
  m_t &= \beta_1 m_{t-1} + (1 - \beta_1) q_t, & v_t &= \beta_2 v_{t-1} + (1 - \beta_2) q_t \odot q_t, \label{eq:adam-mom} \\
  \hat{m}_t &= m_t / (1 - \beta_1^t), & \hat{v}_t &= v_t / (1 - \beta_2^t), \label{eq:adam-bias} \\
  \theta(t) &= (1 - \eta_t \lambda_\wdec)\theta(t-1) - \eta_t\, \hat{m}_t \oslash \left(\sqrt{\hat{v}_t} + \epsilon_\adam\right). && \label{eq:adam-update}
\end{align}
Here $\odot$, $\oslash$, $\sqrt{\cdot}$ are coordinatewise.

\begin{assumption}[Regular AdamW safety trace]
\label{ass:adamw}
There is a radius $r > 0$ around $\theta_A$ such that, for every initialization in $B(\theta_A, r)$:
\begin{enumerate}
  \item each minibatch loss $\ell_t$ is $C^3$ on the induced trace neighborhood;
  \item no clipping decision crosses the boundary $\norm{\nabla \ell_t}_2 = C$;
  \item on the active trainable coordinates, every bias-corrected second moment satisfies $(\hat{v}_t)_i \ge \nu > 0$;
  \item the derivatives of the one-step maps through order three are bounded on this neighborhood.
\end{enumerate}
\end{assumption}

The lower bound on $\hat{v}_t$ is the AdamW square-root regularity condition. PyTorch-style AdamW uses $\sqrt{\hat{v}_t} + \epsilon_\adam$, whose derivative with respect to $\hat{v}_t$ contains $1/\sqrt{\hat{v}_t}$; coordinates with identically zero second moment must be removed from the active set or handled by the smoothed variant $\sqrt{\hat{v}_t + \epsilon_\adam}$.

\begin{theorem}[Exact tangent recursion for multi-step AdamW]
\label{thm:tangent}
Under Assumption~\ref{ass:adamw}, the safety-training map
\[
  T(\theta) = \theta(K)
\]
is differentiable at $\theta_A$. For any initialization perturbation $\delta$, let
\[
  a_t = \frac{\dd \theta(t)}{\dd s}, \qquad
  b_t = \frac{\dd m_t}{\dd s}, \qquad
  c_t = \frac{\dd v_t}{\dd s}
\]
along the perturbed initialization $\theta_A + s\delta$, evaluated at $s = 0$, with $a_0 = \delta$ and $b_0 = c_0 = 0$. Let
\[
  G_t = D(\psi_C \circ \nabla \ell_t)(\theta(t-1)).
\]
When clipping is inactive, $G_t = \nabla^2 \ell_t(\theta(t-1))$. When clipping is active and away from the boundary,
\[
  G_t = \frac{C}{\norm{g_t}_2}\left(I - \frac{g_t g_t^\top}{\norm{g_t}_2^2}\right)\nabla^2 \ell_t(\theta(t-1)).
\]
Then $J_S(\delta) := DT(\theta_A)\delta = a_K$, where
\begin{align}
  h_t &= G_t a_{t-1}, \label{eq:tan-h} \\
  b_t &= \beta_1 b_{t-1} + (1 - \beta_1) h_t, \label{eq:tan-b} \\
  c_t &= \beta_2 c_{t-1} + 2(1 - \beta_2) q_t \odot h_t, \label{eq:tan-c} \\
  \hat{b}_t &= b_t / (1 - \beta_1^t), \qquad \hat{c}_t = c_t / (1 - \beta_2^t), \label{eq:tan-bias} \\
  d_t &= \sqrt{\hat{v}_t} + \epsilon_\adam, \label{eq:tan-d} \\
  a_t &= (1 - \eta_t \lambda_\wdec)a_{t-1} - \eta_t \left( \hat{b}_t \oslash d_t - \tfrac{1}{2}\hat{m}_t \odot \hat{c}_t \oslash \big(\sqrt{\hat{v}_t} \odot d_t^{\odot 2}\big) \right). \label{eq:tan-a}
\end{align}
Equivalently,
\[
  DT(\theta_A) = \Pi_\theta\, DF_K(z_{K-1}) \cdots DF_1(z_0)\, \iota_\theta,
  \qquad
  z_t = (\theta(t), m_t, v_t),
\]
where $\iota_\theta \delta = (\delta, 0, 0)$ and $\Pi_\theta$ projects the final augmented tangent back to parameter coordinates.
\end{theorem}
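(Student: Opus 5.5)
The plan is to treat one AdamW step as a smooth map $F_t$ on the augmented state $z = (\theta, m, v)$, establish its differentiability on the neighborhood given by Assumption~\ref{ass:adamw}, and then apply the chain rule to the composition $T = \Pi_\theta \circ F_K \circ \cdots \circ F_1 \circ \iota_\theta$. Here $\iota_\theta$ embeds $\theta$ with $m_0 = v_0 = 0$. Because the initial moments are constants independent of the initialization, their tangents vanish, which gives $b_0 = c_0 = 0$ and $a_0 = \delta$. Once this is set up, the product formula $DT(\theta_A) = \Pi_\theta\, DF_K \cdots DF_1\, \iota_\theta$ is immediate, and the tangent recursion \eqref{eq:tan-h}--\eqref{eq:tan-a} is just this product applied to $\iota_\theta \delta$, one factor at a time.

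The steps, in order, are as follows.

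First, show that each $F_t$ is differentiable ($C^2$, in fact) at the realized state $z_{t-1}$. The gradient map $\theta \mapsto \nabla \ell_t(\theta)$ is $C^2$ because $\ell_t \in C^3$. The clipping $\psi_C$ is smooth on each of the open sets $\{\norm{g}_2 < C\}$ and $\{\norm{g}_2 > C\}$. By item 2 of the assumption the trace stays in a single branch for every initialization in $B(\theta_A, r)$, so $\psi_C \circ \nabla \ell_t$ is locally a fixed smooth formula. The moment updates and bias corrections are linear. Finally, $v \mapsto \sqrt{v}$ is smooth on $\{v_i \ge \nu\}$ by item 3, and the denominator $d_t \ge \sqrt{\nu} + \epsilon_\adam > 0$, so the division is smooth as well.

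Second, compute $G_t$. In the unclipped branch this is just the Hessian. In the clipped branch, differentiate $g \mapsto C g / \norm{g}_2$ to get $\frac{C}{\norm{g}}(I - g g^\top / \norm{g}^2)$, and compose with $\nabla^2 \ell_t$ by the chain rule.

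Third, differentiate each line of the update along $s$:
\begin{itemize}
\item $h_t = G_t a_{t-1}$ is the tangent of $q_t$;
\item the tangent of the $m$-update is linear, giving $b_t$;
\item the tangent of $q_t \odot q_t$ is $2 q_t \odot h_t$, giving $c_t$;
\item bias correction divides by constants, giving $\hat{b}_t$ and $\hat{c}_t$.
\end{itemize}
For the parameter update, the decay term contributes $(1 - \eta_t \lambda_\wdec) a_{t-1}$. The quotient rule applied to $\hat{m}_t \oslash (\sqrt{\hat{v}_t} + \epsilon_\adam)$ gives $\hat{b}_t \oslash d_t - \hat{m}_t \odot \big(\tfrac{1}{2}\hat{c}_t \oslash \sqrt{\hat{v}_t}\big) \oslash d_t^{\odot 2}$, which is exactly the bracket in \eqref{eq:tan-a}. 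An induction on $t$ then identifies $(a_t, b_t, c_t)$ with $DF_t \cdots DF_1\, \iota_\theta \delta$, and projecting gives $a_K = DT(\theta_A)\delta$.

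The main obstacle is the first step, namely ensuring the composition is well-defined and differentiable across all $K$ steps at once. The branch and positivity conditions must hold along the whole trajectory for a full neighborhood of initializations, not only along the realized one. I would handle this by working inside the ball $B(\theta_A, r)$ of Assumption~\ref{ass:adamw}: items 2 and 3 are stated uniformly there, so every induced state lies in the open region where each $F_t$ is smooth. Differentiability of $T$ at $\theta_A$ then follows from the chain rule for compositions of maps differentiable on open sets. Item 4 is not needed for differentiability itself; it only matters for the $O(\norm{u}^2)$ remainders used in Theorem~\ref{thm:identity}.
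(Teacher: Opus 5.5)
Your proposal is correct and follows the paper's own proof. You treat each AdamW step as a map $F_t$ on the augmented state $(\theta,m,v)$, obtain its differentiability from Assumption~\ref{ass:adamw}, and compose via the chain rule to get $DT(\theta_A)=\Pi_\theta\,DF_K\cdots DF_1\,\iota_\theta$; you then read off the recursion line by line, with the quotient rule producing the $-\tfrac12\hat m_t\odot\hat c_t\oslash(\sqrt{\hat v_t}\odot d_t^{\odot 2})$ term. You add detail the paper leaves implicit, such as the derivation of $G_t$ under clipping and the bound $d_t\ge\sqrt{\nu}+\epsilon_{\adam}$, though uniformity over the ball is more than you need: the strict branch and positivity conditions at the realized states are open conditions, so continuity already supplies the required neighborhoods.
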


\begin{proof}
For a fixed trace, one AdamW step is a deterministic map $F_t : (\theta(t-1), m_{t-1}, v_{t-1}) \mapsto (\theta(t), m_t, v_t)$. By Assumption~\ref{ass:adamw}, the composition of the loss gradient, the clipping branch, the moment recurrences, bias correction, coordinatewise square root, and decoupled weight decay is differentiable on the trace neighborhood. Hence $T = \Pi_\theta F_K \circ \cdots \circ F_1 \circ \iota_\theta$ is differentiable and the chain rule gives the augmented-matrix expression.

Expanding the augmented Jacobian product gives the displayed recurrences. The terms for $b_t$ and $c_t$ are the derivatives of the first- and second-moment updates. For the parameter update, differentiate coordinatewise:
\[
  D\!\left(\frac{\hat{m}_i}{\sqrt{\hat{v}_i} + \epsilon_\adam}\right)
  = \frac{\dot{\hat{m}}_i}{\sqrt{\hat{v}_i} + \epsilon_\adam}
  - \frac{\hat{m}_i}{2\sqrt{\hat{v}_i}\,(\sqrt{\hat{v}_i} + \epsilon_\adam)^2}\,\dot{\hat{v}}_i.
\]
Substituting $\dot{\hat{m}} = \hat{b}_t$ and $\dot{\hat{v}} = \hat{c}_t$ into the AdamW parameter update yields the recursion for $a_t$. Therefore $a_K = DT(\theta_A)\delta$.
\end{proof}

\begin{remark}[Why the SGD product is only a corollary]
For actual SGD with decoupled weight decay,
\[
  \theta(t) = (1 - \eta_t \lambda_\wdec)\theta(t-1) - \eta_t \nabla \ell_t(\theta(t-1)),
\]
the tangent is
\[
  DT_{\mathrm{SGD}}(\theta_A) = \prod_{t=K}^{1}\Big( (1 - \eta_t \lambda_\wdec)I - \eta_t \nabla^2 \ell_t(\theta(t-1)) \Big),
\]
with $\nabla^2 \ell_t$ replaced by $G_t$ under clipping. This is not obtained by setting $\beta_1 = \beta_2 = 0$ in AdamW. That limit remains adaptively normalized by $\abs{g_t} + \epsilon_\adam$.
\end{remark}

\subsection{First-order process-sidecar identity}
\label{app:firstorder}

Define the exact transported memory direction and residual sidecar
\[
  p := J_S(u) := DT(\theta_A)u, \qquad r := R_{S\leftarrow M} := p - u.
\]
For the theorem statements in this appendix we work with the exact-derivative edit family
\[
  \theta_{\mathrm{exact}}(\lambda, \gamma) = \theta_{AMS} - \lambda u - \gamma r.
\]
The empirical $\hat{\theta}(\lambda, \gamma)$ used in \S\ref{sec:method} of the main paper replaces $r$ by the centered-secant estimate $\hat{r} = \hat{R}_{S\leftarrow M}$; the secant bias $\norm{\hat{r} - r} = O(\varepsilon^2 \norm{u}^3)$ enters as an additive $O(\varepsilon^2 \norm{u}^3)$ term in the expansions below.

\begin{theorem}[Process-JVP identity and first-order oracle recovery]
\label{thm:jvp-identity}
Suppose $T$ is twice differentiable on $B(\theta_A, r_0)$ and $\norm{D^2 T(x)}_\op \le L_2$ throughout the ball. If $\norm{u} \le r_0$, then
\[
  \theta_{AMS} = \theta_{AS} + p + \rho_2 = \theta_{AS} + u + r + \rho_2,
  \qquad
  \norm{\rho_2} \le \frac{L_2}{2}\norm{u}^2.
\]
Consequently,
\[
  \theta_{\mathrm{exact}}(1, 1) = \theta_{AS} + \rho_2,
  \qquad
  \norm{\theta_{\mathrm{exact}}(1, 1) - \theta_{AS}} \le \frac{L_2}{2}\norm{u}^2.
\]
\end{theorem}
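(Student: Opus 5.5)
The plan is to apply Taylor's theorem with integral remainder to $T$ along the segment from $\theta_A$ to $\theta_A+u$. Then I substitute the checkpoint identities $T(\theta_A)=\theta_{AS}$ and $T(\theta_A+u)=T(\theta_{AM})=\theta_{AMS}$. Define $\phi(s)=T(\theta_A+su)$ for $s\in[0,1]$. Because $\norm{u}\le r_0$, the whole segment $\{\theta_A+su: s\in[0,1]\}$ lies in $B(\theta_A,r_0)$, where $T$ is twice differentiable with $\norm{D^2T}_\op\le L_2$. Hence $\phi$ is twice differentiable with $\phi'(s)=DT(\theta_A+su)u$ and $\phi''(s)=D^2T(\theta_A+su)[u,u]$.

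First, I write
\[
  \phi(1)=\phi(0)+\phi'(0)+\int_0^1(1-s)\,\phi''(s)\,\dd s ,
\]
and set $\rho_2:=\int_0^1(1-s)D^2T(\theta_A+su)[u,u]\,\dd s$. Bounding the integrand by $L_2\norm{u}^2$ gives $\norm{\rho_2}\le L_2\norm{u}^2\int_0^1(1-s)\,\dd s=\tfrac{L_2}{2}\norm{u}^2$. With $\phi(0)=\theta_{AS}$ and $\phi'(0)=DT(\theta_A)u=p$, this yields $\theta_{AMS}=\theta_{AS}+p+\rho_2$. The definition $r=p-u$ then gives the second form $\theta_{AS}+u+r+\rho_2$. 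Note that $p=J_S(u)$ here is exactly the quantity computed by the tangent recursion of Theorem~\ref{thm:tangent}, though the present statement needs only its existence.

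Second, I substitute this into the exact-derivative family at $(\lambda,\gamma)=(1,1)$:
\[
  \theta_{\mathrm{exact}}(1,1)=\theta_{AMS}-u-r=\theta_{AS}+\rho_2 ,
\]
and the norm bound follows immediately from the bound on $\rho_2$.

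The only delicate step is justifying the integral form of the remainder when $T$ is assumed merely twice differentiable and $D^2T$ need not be continuous, so $\phi''$ might not be Riemann integrable. I will avoid that issue by using the mean-value inequality instead. Apply it to $g(s)=\phi(s)-\phi(0)-s\phi'(0)$. Its derivative satisfies
\[
  \norm{g'(s)}=\norm{\bigl(DT(\theta_A+su)-DT(\theta_A)\bigr)u}\le L_2\,s\,\norm{u}^2 ,
\]
where the inequality is the mean-value inequality applied to $DT$ along the segment. A second application to $g$, comparing against $s\mapsto L_2 s^2\norm{u}^2/2$, gives $\norm{g(1)}\le \tfrac{L_2}{2}\norm{u}^2$ with no integrability assumption. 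Everything else is bookkeeping.
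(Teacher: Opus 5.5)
Your proposal is correct and follows the paper's proof: Taylor's theorem with second-order remainder for $T$ along the segment from $\theta_A$ to $\theta_A+u$, then substituting $p$ and $r=p-u$ and subtracting $u+r=p$. Your mean-value-inequality treatment of the remainder is a more careful justification of the step the paper invokes without comment; the only quibble, which the paper's statement shares, is that putting the whole segment inside $B(\theta_A,r_0)$ requires $\norm{u}<r_0$ or a closed ball.
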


\begin{proof}
Because $\theta_{AM} = \theta_A + u$, we have $\theta_{AMS} = T(\theta_A + u)$ and $\theta_{AS} = T(\theta_A)$. Taylor's theorem with second-order remainder gives
\[
  T(\theta_A + u) = T(\theta_A) + DT(\theta_A)u + \rho_2,
  \qquad
  \norm{\rho_2} \le \frac{L_2}{2}\norm{u}^2.
\]
Substitute the definitions of $p$ and $r = p - u$. The statement for $\theta_{\mathrm{exact}}(1, 1)$ follows by subtracting $u + r = p$.
\end{proof}

\begin{proposition}[Naive task arithmetic misses the transported component]
\label{prop:naive-miss}
Let $P_u$ be Euclidean projection onto $\spanof\{u\}$, and define $r_\perp = (I - P_u)r$. Under Theorem~\ref{thm:jvp-identity}, every naive edit $\theta_{\mathrm{naive}}(\lambda) = \theta_{AMS} - \lambda u$ satisfies
\[
  \norm{\theta_{\mathrm{naive}}(\lambda) - \theta_{AS}} \ge \norm{r_\perp} - \norm{\rho_2}.
\]
The process-sidecar point $\theta_{\mathrm{exact}}(1, 1)$ has error at most $\norm{\rho_2}$. Thus if $\norm{r_\perp} > 2\norm{\rho_2}$, the sidecar point is strictly closer to the counterfactual oracle than every naive edit in parameter norm.
\end{proposition}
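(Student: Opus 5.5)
The plan is to start from the decomposition in Theorem~\ref{thm:jvp-identity}, $\theta_{AMS} = \theta_{AS} + u + r + \rho_2$ with $\norm{\rho_2} \le \frac{L_2}{2}\norm{u}^2$. Substituting this into the naive edit gives, for every $\lambda \in \R$, the error vector
\[
  \theta_{\mathrm{naive}}(\lambda) - \theta_{AS} = (1 - \lambda)u + r + \rho_2 .
\]
This is the same residual that appears in the proof of Theorem~\ref{thm:necessity}, and the argument will follow the same route.

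Next I apply the orthogonal projector $I - P_u$. It annihilates $(1-\lambda)u$ for every $\lambda$, and it is nonexpansive in the Euclidean norm. Together these give
\[
  \norm{(1-\lambda)u + r + \rho_2} \ge \norm{(I - P_u)(r + \rho_2)} = \norm{r_\perp + (I-P_u)\rho_2}.
\]
The reverse triangle inequality then bounds this from below by $\norm{r_\perp} - \norm{(I-P_u)\rho_2}$. Using nonexpansiveness once more, this is at least $\norm{r_\perp} - \norm{\rho_2}$. The bound holds uniformly in $\lambda$, which gives the first claim.

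For the sidecar point, Theorem~\ref{thm:jvp-identity} already gives $\theta_{\mathrm{exact}}(1,1) - \theta_{AS} = \rho_2$, so its error is exactly $\norm{\rho_2}$. For the strict comparison, assume $\norm{r_\perp} > 2\norm{\rho_2}$. Then every naive error is at least $\norm{r_\perp} - \norm{\rho_2} > \norm{\rho_2}$, which is the sidecar error.

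There is no real obstacle here. The only points to state carefully are these:
\begin{itemize}
  \item The lower bound is uniform over all real $\lambda$, not only over the grid.
  \item The hypothesis $\norm{u} \le r_0$ is inherited from Theorem~\ref{thm:jvp-identity}.
  \item The case $u = 0$ needs a remark. There $P_u = 0$ and $r = 0$, so the statement is trivial.
\end{itemize}
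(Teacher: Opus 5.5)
Your proposal is correct and follows essentially the same route as the paper: write the naive residual as $(1-\lambda)u + r + \rho_2$, remove the $u$-component with $I - P_u$, apply the reverse triangle inequality, and read the sidecar bound off Theorem~\ref{thm:jvp-identity}. Your version is slightly more explicit than the paper's terse argument, since it spells out the nonexpansiveness of $I - P_u$ applied to $\rho_2$ and the trivial $u = 0$ case.
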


\begin{proof}
For the naive line,
\[
  \theta_{\mathrm{naive}}(\lambda) - \theta_{AS} = (1 - \lambda)u + r + \rho_2.
\]
The best choice of $\lambda$ can cancel only the component of $r$ lying in $\spanof\{u\}$; the orthogonal component $r_\perp$ remains. The triangle inequality gives the lower bound. The upper bound for $\theta_{\mathrm{exact}}(1, 1)$ is Theorem~\ref{thm:jvp-identity}.
\end{proof}

\subsection{The centered-secant sidecar used by the implementation}
\label{app:secant}

The code estimates the transported direction by a finite centered secant,
\[
  \hat{J}_{S,\varepsilon}(u) = \frac{T(\theta_A + \varepsilon u) - T(\theta_A - \varepsilon u)}{2\varepsilon}.
\]
For $\varepsilon = 1$, this is a centered secant at the memory-edit scale, not an infinitesimal derivative. The derivative theorem is the local limit.

\begin{proposition}[Centered-secant bias]
\label{prop:secant-bias}
Assume $T \in C^3$ on the segment $\theta_A + su$, $\abs{s} \le \varepsilon$, and $\norm{D^3 T(x)}_\op \le L_3$ there. Then
\[
  \norm{\hat{J}_{S,\varepsilon}(u) - DT(\theta_A)u} \le \frac{L_3}{6}\varepsilon^2 \norm{u}^3.
\]
Consequently,
\[
  \norm{\theta_{AMS} - \hat{J}_{S,\varepsilon}(u) - \theta_{AS}} \le \frac{L_2}{2}\norm{u}^2 + \frac{L_3}{6}\varepsilon^2 \norm{u}^3.
\]
\end{proposition}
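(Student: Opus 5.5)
The plan is to reduce the statement to the scalar function $\phi(s) = T(\theta_A + s u)$ on $\abs{s} \le \varepsilon$, which is $C^3$ by hypothesis, and to Taylor-expand it to third order around $s=0$ in both directions. By the chain rule, $\phi'(0) = DT(\theta_A)u$, $\phi''(0) = D^2T(\theta_A)[u,u]$, and $\phi'''(s) = D^3T(\theta_A + su)[u,u,u]$. The last of these gives the pointwise bound $\norm{\phi'''(s)} \le L_3 \norm{u}^3$ on the segment, using the operator norm of the trilinear map.

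Next I write the integral form of the remainder for $\phi(\pm\varepsilon)$, so the bound holds for vector-valued maps without a mean-value point:
\[
  \phi(\pm\varepsilon) = \phi(0) \pm \varepsilon \phi'(0) + \tfrac{\varepsilon^2}{2}\phi''(0) + E_\pm,
  \qquad
  E_\pm = \int_0^{\pm\varepsilon} \tfrac{(\pm\varepsilon - s)^2}{2}\phi'''(s)\,\dd s .
\]
Subtracting the two expansions cancels the even-order terms $\phi(0)$ and $\tfrac{\varepsilon^2}{2}\phi''(0)$. This cancellation is the whole point of the centered secant. It leaves
\[
  \hat{J}_{S,\varepsilon}(u) - DT(\theta_A)u = \frac{E_+ - E_-}{2\varepsilon}.
\]
Each remainder satisfies $\norm{E_\pm} \le L_3\norm{u}^3 \int_0^\varepsilon \tfrac{(\varepsilon - s)^2}{2}\,\dd s = L_3\norm{u}^3 \varepsilon^3/6$. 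Combining the two with the triangle inequality gives $L_3\varepsilon^2\norm{u}^3/6$, which is the first claim.

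For the consequence, I split the error into two pieces:
\[
  \theta_{AMS} - \hat{J}_{S,\varepsilon}(u) - \theta_{AS} = \big(\theta_{AMS} - DT(\theta_A)u - \theta_{AS}\big) + \big(DT(\theta_A)u - \hat{J}_{S,\varepsilon}(u)\big).
\]
The first piece is $\rho_2$ from Theorem~\ref{thm:jvp-identity}, bounded by $\tfrac{L_2}{2}\norm{u}^2$. That theorem's hypotheses, a bound on $D^2T$ on a ball containing $\theta_A+u$, have to be assumed alongside the ones here. The second piece is bounded by the first claim, and the triangle inequality finishes the argument.

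The only delicate point is the constant in the remainder bound. The integral form gives exactly $\varepsilon^3/6$ for each side, so the combined bound is $(\varepsilon^3/6 + \varepsilon^3/6)/(2\varepsilon) = \varepsilon^2/6$, matching the stated constant. Any looser handling of the remainders, such as the Lagrange form (which is unavailable for vector-valued maps), would not give this constant.
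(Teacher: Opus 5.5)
Your proof is correct and follows the paper's argument: Taylor-expand $\varphi(s)=T(\theta_A+su)$ at $s=\pm\varepsilon$, let the centered difference cancel the even-order terms, bound the third-order remainder by $(L_3/6)\varepsilon^2\norm{u}^3$, and add the $\rho_2$ bound of Theorem~\ref{thm:jvp-identity} by the triangle inequality; your integral-form remainder simply spells out the step the paper states in one line. One small correction to your closing remark: applying the scalar Lagrange form to $\langle w,\varphi(s)\rangle$, with $w$ the unit vector along the error, gives the same constant, so the integral form is convenient rather than essential.
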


\begin{proof}
Apply the one-dimensional Taylor theorem to $\varphi(s) = T(\theta_A + su)$. The centered difference cancels the even second-order term and leaves a third-derivative remainder bounded by $(L_3/6)\varepsilon^2 \norm{u}^3$. Add this estimator bias to the remainder in Theorem~\ref{thm:jvp-identity}.
\end{proof}

\subsection{Second-order frontier: why the two-dimensional family can beat the process-JVP line}
\label{app:frontier}

The first-order theorem does not say that the two-dimensional edit beats the process-JVP line at first order. The process-JVP line $\theta_{AMS} - \eta p$ contains the same first-order oracle at $\eta = 1$. The advantage of the full $(u, r)$ family is second-order and validation-dependent.

Assume in this subsection that $T \in C^3$ near $\theta_A$, and define
\[
  q = \tfrac{1}{2} D^2 T(\theta_A)[u, u].
\]
Then
\[
  \theta_{AMS} = \theta_{AS} + p + q + O(\norm{u}^3).
\]
The residual from $\theta_{AS}$ after a process-JVP edit is
\[
  e_{\mathrm{1D}}(\eta) = q + (1 - \eta)p + O(\norm{u}^3),
\]
whereas the residual after a two-dimensional edit is
\begin{align}
  e_{\mathrm{2D}}(\lambda, \gamma) &= q + (1 - \lambda)u + (1 - \gamma)r + O(\norm{u}^3) \label{eq:e2d-raw} \\
  &= q + \tau p + \zeta r + O(\norm{u}^3), \label{eq:e2d-natural}
\end{align}
with natural coordinates
\[
  \tau = 1 - \lambda, \qquad \zeta = \lambda - \gamma.
\]
The process-JVP line is exactly the subfamily $\zeta = 0$.

\begin{theorem}[Local quadratic frontier expansion]
\label{thm:frontier-app}
Let $\ell$ be a twice-differentiable local scalarization of the validation criterion around $\theta_{AS}$, with expansion
\[
  \ell(\theta_{AS} + e) = \ell_0 + g^\top e + \tfrac{1}{2} e^\top H e + O(\norm{e}^3),
\]
where $H \succ 0$ on $\spanof\{p, r\}$. Suppose the quadratic minimizers below remain in the local neighborhood where the expansion is valid. For any subspace $W \subseteq \spanof\{p, r\}$, define the quadratic surrogate value
\[
  V(W) = \min_{w \in W}\left( g^\top(q + w) + \tfrac{1}{2}(q + w)^\top H (q + w) \right).
\]
Let
\[
  W_1 = \spanof\{p\}, \qquad W_2 = \spanof\{p, r\} = \spanof\{u, r\}.
\]
Then $V(W_2) \le V(W_1)$. Moreover, define
\[
  r_\perp = r - P^H_{\spanof\{p\}} r,
\]
where $P^H$ is $H$-orthogonal projection. If $r_\perp \ne 0$, the improvement is strict iff
\[
  q^\top H r_\perp + g^\top r_\perp \ne 0.
\]
After profiling over the process-JVP coordinate $\tau$, the optimal off-diagonal coordinate is
\[
  \zeta^* = -\frac{q^\top H r_\perp + g^\top r_\perp}{\norm{r_\perp}^2_H}.
\]
\end{theorem}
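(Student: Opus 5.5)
The plan is to reduce everything to an explicit quadratic minimization over the two-dimensional subspace $W_2=\spanof\{p,r\}$, using an $H$-orthogonal basis adapted to $W_1$. Expanding the surrogate for $w\in W_2$ gives
\[
  g^\top(q+w)+\tfrac12(q+w)^\top H(q+w) = c_0 + \tilde g^\top w + \tfrac12 w^\top H w,
  \qquad \tilde g := g + Hq,
\]
where $c_0=g^\top q+\tfrac12 q^\top Hq$ does not depend on $w$. Since $H\succ 0$ on $\spanof\{p,r\}$, the restriction of this quadratic to any subspace $W\subseteq W_2$ is strictly convex, so each $V(W)$ is attained. The weak inequality $V(W_2)\le V(W_1)$ is then immediate from $W_1\subseteq W_2$. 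Note that $\spanof\{p,r\}=\spanof\{u,r\}$ because $p=u+r$.

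For the strictness and the formula for $\zeta^*$, I would first assume $p\neq 0$ so that $P^H_{\spanof\{p\}}r=\kappa p$ with $\kappa=\inner{r}{p}_H/\norm{p}_H^2$. Then $r=\kappa p+r_\perp$ with $\inner{r_\perp}{p}_H=0$. Any $w=\tau p+\zeta r\in W_2$ can be rewritten as $w=\tau' p+\zeta r_\perp$ with $\tau'=\tau+\kappa\zeta$. Because the map $(\tau,\zeta)\mapsto(\tau',\zeta)$ is a linear bijection, minimizing over $(\tau,\zeta)$ is the same as minimizing over $(\tau',\zeta)$. By $H$-orthogonality the cross term vanishes and the objective separates:
\[
  c_0+\big(\tau'\,\tilde g^\top p+\tfrac12\tau'^2\norm{p}_H^2\big)+\big(\zeta\,\tilde g^\top r_\perp+\tfrac12\zeta^2\norm{r_\perp}_H^2\big).
\]
The $\tau'$ bracket, minimized alone, is exactly $V(W_1)-c_0$. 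Hence
\[
  V(W_2)=V(W_1)-\frac{(\tilde g^\top r_\perp)^2}{2\norm{r_\perp}_H^2}
\]
whenever $r_\perp\ne 0$. Because $H$ is symmetric, $\tilde g^\top r_\perp=q^\top Hr_\perp+g^\top r_\perp$, so the improvement is strict iff this quantity is nonzero. The minimizing coefficient is $\zeta^*=-(q^\top Hr_\perp+g^\top r_\perp)/\norm{r_\perp}_H^2$. This coefficient of $r_\perp$ equals the coefficient $\zeta$ of $r$ in the natural coordinates of \eqref{eq:e2d-natural}, since the reparametrization changes only $\tau$. Profiling over $\tau$ is therefore precisely what leaves $\zeta$ as the free off-diagonal coordinate.

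The step needing most care is this bookkeeping: checking that the residual $q+\tau p+\zeta r$ from \eqref{eq:e2d-natural} corresponds to $w=\tau p+\zeta r$ under the stated convention, and that the signs in $\zeta^*$ match. I would also treat the degenerate case $p=0$ separately. There $W_1=\{0\}$ and $r_\perp=r$, and the same one-variable computation in $\zeta$ gives the result directly. The hypothesis that the minimizers stay in the neighborhood of validity is not used in the surrogate computation itself. It serves only to transfer the statement about $V$ to the true scalarization $\ell$ up to $O(\norm{e}^3)$, and I would remark on this rather than prove more.
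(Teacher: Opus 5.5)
Your proof is correct and follows the same route as the paper. Both arguments split $W_2$ $H$-orthogonally into $\spanof\{p\}$ and $\spanof\{r_\perp\}$, profile out the process-JVP coordinate, and read off the strictness criterion and $\zeta^*$ from a one-variable quadratic in $\zeta$ with linear coefficient $g^\top r_\perp + q^\top H r_\perp$ and curvature $\norm{r_\perp}_H^2$. Your explicit separation, giving $V(W_2) = V(W_1) - (g^\top r_\perp + q^\top H r_\perp)^2 / (2\norm{r_\perp}_H^2)$, makes the \emph{only if} direction and the identification of $\zeta$ with the coefficient of $r$ more transparent than the paper's derivative-at-zero argument. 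You also handle the degenerate case $p = 0$, which the paper does not treat.
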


\begin{proof}
Weak dominance is immediate from $W_1 \subseteq W_2$: the minimum over the larger feasible subspace cannot be worse.

For strictness, first minimize over $W_1$. Let $w^*_1 \in W_1$ be the unique minimizer, which exists because $H \succ 0$ on the plane. The first-order condition says
\[
  g^\top h + \inner{q + w^*_1}{h}_H = 0 \quad \text{for all } h \in W_1.
\]
Since $r_\perp$ is $H$-orthogonal to $W_1$, varying from the profiled one-dimensional optimum in the off-diagonal direction gives the one-variable quadratic
\[
  \phi(\zeta) = g^\top(q + w^*_1 + \zeta r_\perp) + \tfrac{1}{2}\norm{q + w^*_1 + \zeta r_\perp}^2_H.
\]
Its derivative at zero is
\[
  \phi'(0) = g^\top r_\perp + \inner{q + w^*_1}{r_\perp}_H = g^\top r_\perp + q^\top H r_\perp,
\]
because $w^*_1 \in W_1$ and $r_\perp \perp_H W_1$. Its curvature is $\phi''(\zeta) = \norm{r_\perp}^2_H > 0$. Hence adding the off-diagonal direction strictly improves the value exactly when $\phi'(0) \ne 0$, and minimizing $\phi$ gives the displayed $\zeta^*$.
\end{proof}

\begin{remark}[What the theorem does and does not say]
The theorem proves weak dominance of the two-dimensional local quadratic frontier over the process-JVP line, and strict dominance under a Hessian-weighted off-diagonal projection condition. It does not prove that Hessian anisotropy alone is sufficient. Anisotropy is neither necessary nor sufficient: if the profiled off-diagonal derivative $q^\top H r_\perp + g^\top r_\perp$ vanishes, the process-JVP line is already locally optimal even under an anisotropic $H$; conversely, with $H = I$, any quadratic correction with a nonzero component along $r_\perp$ gives strict two-dimensional improvement.
\end{remark}

\begin{corollary}[Constrained-validation/KKT interpretation]
\label{cor:kkt}
Let the validation selector minimize $f_0(e) = -M(e)$ subject to smooth local constraints $c_j(e) \le 0$, e.g.\ secret-AUC and skill-retention thresholds. At a regular local constrained optimum, let $\mu_j \ge 0$ be active KKT multipliers and define the Lagrangian scalarization
\[
  \mathcal{L}(e, \mu) = f_0(e) + \sum_j \mu_j c_j(e).
\]
If the Hessian $H_\mathcal{L} = \nabla^2_e \mathcal{L}(0, \mu)$ satisfies the second-order sufficient condition on the candidate plane or on the relevant critical cone, then Theorem~\ref{thm:frontier-app} applies with $g = \nabla_e \mathcal{L}(0, \mu)$ and $H = H_\mathcal{L}$. Thus active validation constraints change the scalarization and Hessian, but not the geometric criterion: strict off-diagonal improvement is governed by the $H_\mathcal{L}$-weighted projection of the second-order/KKT correction onto $r_\perp$.
\end{corollary}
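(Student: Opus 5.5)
The plan is to reduce Corollary~\ref{cor:kkt} to Theorem~\ref{thm:frontier-app} by replacing the validation scalarization $\ell$ with the Lagrangian evaluated at fixed multipliers. Let $e^\star$ be the regular local constrained optimum, with active multipliers $\mu_j \ge 0$. I work in the shifted coordinates the corollary uses, so the expansion point is $e = 0$ (that is, $\theta_{AS}$).

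The first step is to freeze $\mu$ and treat $e \mapsto \mathcal{L}(e,\mu)$ as a single smooth scalar function. Each $f_0$ and $c_j$ is at least $C^2$ locally, so $\mathcal{L}(\cdot,\mu)$ admits a second-order Taylor expansion
\[
  \mathcal{L}(e,\mu) = \mathcal{L}(0,\mu) + g^\top e + \tfrac12 e^\top H_{\mathcal{L}} e + O(\norm{e}^3),
\]
with $g = \nabla_e \mathcal{L}(0,\mu)$ and $H_{\mathcal{L}} = \nabla^2_e \mathcal{L}(0,\mu)$. This has exactly the form required of $\ell$ in Theorem~\ref{thm:frontier-app}.

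The second step is to supply that theorem's definiteness hypothesis. Under the plane version of the second-order sufficient condition, $H_{\mathcal{L}} \succ 0$ on $\spanof\{p,r\}$ directly, and Theorem~\ref{thm:frontier-app} applies verbatim. That gives three conclusions:
\begin{itemize}
  \item weak dominance $V(W_2) \le V(W_1)$;
  \item strictness if and only if $q^\top H_{\mathcal{L}} r_\perp + g^\top r_\perp \ne 0$, where $r_\perp$ is now the $H_{\mathcal{L}}$-orthogonal residual;
  \item the formula for $\zeta^*$.
\end{itemize}

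The third step justifies using the Lagrangian as the surrogate for the constrained selector. Complementary slackness gives $\mu_j c_j = 0$, so on the set where the active constraints hold with equality to first order, the constrained objective and $\mathcal{L}$ agree up to second order. This is the standard reduced-Hessian argument: to second order, the local constrained minimization over edits $q+w$ coincides with minimizing the Lagrangian quadratic model.

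The main obstacle is the critical-cone version of the hypothesis. There, $H_{\mathcal{L}}$ is only positive on the cone of directions tangent to the active constraints, not on the whole plane. My first attempt would restrict $W_1$ and $W_2$ to their intersections with the linearized active set $\{w : \nabla c_j^\top w = 0\}$. On that subspace $H_{\mathcal{L}}$ is positive definite, and Theorem~\ref{thm:frontier-app}'s proof goes through with $p$ and $r$ replaced by their projections. The criterion is then read with $r_\perp$ computed inside the critical subspace. I would state explicitly that this reinterpretation is what "on the relevant critical cone" means, since the paper gives no finer specification. I would also note that regularity (LICQ) is what makes the multipliers unique, and hence makes $g$ and $H_{\mathcal{L}}$ well-defined.
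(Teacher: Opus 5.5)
Your first two steps are the paper's whole argument. The corollary has no separate proof; it is Theorem~\ref{thm:frontier-app} applied to the fixed-multiplier scalarization $\ell = \mathcal{L}(\cdot,\mu)$. So under the plane form of the hypothesis ($H_{\mathcal{L}} \succ 0$ on $\spanof\{p,r\}$) you are finished.

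Your third step is unnecessary and, as written, wrong, so drop it. Complementary slackness only says $\mu_j c_j = 0$ at the optimum itself. On the set where the active constraints hold to first order, $c_j = O(\norm{e}^2)$, so $\mathcal{L}(e,\mu) - f_0(e) \approx \tfrac12 \sum_j \mu_j\, e^\top \nabla^2 c_j\, e$. That is a genuinely second-order discrepancy: it is the constraint-curvature term that makes $H_{\mathcal{L}}$, rather than $\nabla^2 f_0$, the relevant matrix. The actual reduced-Hessian fact concerns curves lying exactly on the active manifold through a point where $\nabla_e \mathcal{L} = 0$. If $e=0$ were such a point, $g$ would vanish. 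The corollary keeps $g = \nabla_e \mathcal{L}(0,\mu)$ as a linear term, so it uses $\mathcal{L}(\cdot,\mu)$ as a fixed penalized scalarization. It does not claim an exact second-order reduction of the constrained selector over the affine plane $q + \spanof\{p,r\}$, which would in any case carry its own multipliers.

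In the critical-cone case you correctly see that Theorem~\ref{thm:frontier-app} does not apply verbatim. SOSC on a cone $K$ does not make $H_{\mathcal{L}}$ positive definite on $\spanof\{p,r\}$, and the paper merely asserts the extension. But your repair does not give the stated criterion, because it conflates intersecting with $K$ and projecting onto it.
\begin{itemize}
\item Intersection: $W_1 \cap K = \{0\}$ unless $p \in K$. If both named constraints are active, $W_2 \cap K$ is generically $\{0\}$ too, so the comparison is vacuous.
\item Projection: $\Pi_K p$ and $\Pi_K r$ span a plane whose points are not residuals $q + \tau p + \zeta r$ of any edit $\theta_{AMS} - \lambda u - \gamma r$. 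Also, because of the offset $q$, the linearized feasible set for $w$ is affine, not a subspace.
\end{itemize}
Either way, the resulting $r_\perp$ is not the corollary's.

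A route that stays on $\spanof\{p,r\}$ is Debreu's (Finsler's) lemma. Under strict complementarity, $K = \ker A$ for the active-constraint Jacobian $A$, and SOSC gives $H_{\mathcal{L}} + \rho A^\top A \succ 0$ for all sufficiently large $\rho$. That matrix is the Hessian of the augmented Lagrangian $\mathcal{L} + \tfrac{\rho}{2}\sum_{j \text{ active}} c_j^2$ at a point where the active constraints vanish, and its gradient there is still $g$. Theorem~\ref{thm:frontier-app} then applies verbatim, but with the augmented matrix, not $H_{\mathcal{L}}$, in the criterion and in the definition of $r_\perp$.
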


\begin{remark}[Indefinite Hessians]
If $H$ is indefinite, the unconstrained projection formula is not a valid global minimization statement. The safe alternative is a trust-region quadratic comparison,
\[
  \min_{w \in W,\, \norm{w} \le \rho} g^\top(q + w) + \tfrac{1}{2}(q + w)^\top H (q + w).
\]
Weak dominance still follows from $W_1 \cap B_\rho \subseteq W_2 \cap B_\rho$. The explicit projection and $\zeta^*$ formula should be used only when the local second-order sufficient condition makes the relevant quadratic positive definite on the searched plane, or when the trust-region optimum is interior and the restricted Hessian is positive definite there.
\end{remark}

\subsection{Oracle-free finite-grid selector}
\label{app:selector}

Let $\mathcal{G}$ be the declared finite edit grid. For each candidate $a \in \mathcal{G}$, define population validation quantities
\[
  A(a) = \text{secret AUC}, \quad
  R(a) = \text{skill retention}, \quad
  M(a) = \text{refusal margin}, \quad
  N(a) = \text{edit norm}.
\]
The population decision rule is
\[
  \max_{a \in \mathcal{G}} M(a) \quad \text{subject to} \quad A(a) \le a_0, \quad R(a) \ge r_0,
\]
with minimum $N(a)$ as a deterministic tie-break. The empirical selector uses validation estimates $\hat{A}, \hat{R}, \hat{M}$ computed before any test evaluation. In the experiments, $a_0 = 0.6$ and $r_0 = 0.9$.

\begin{theorem}[Slacked finite-grid selector guarantee]
\label{thm:selector}
Suppose that, with probability at least $1 - \delta_c$, all candidates satisfy
\[
  \abs{\hat{A}(a) - A(a)} \le \epsilon_A(a), \qquad \abs{\hat{R}(a) - R(a)} \le \epsilon_R(a).
\]
Let the empirical selector use tightened feasibility
\[
  \hat{A}(a) \le a_0 - \epsilon_A(a), \qquad \hat{R}(a) \ge r_0 + \epsilon_R(a),
\]
and choose among feasible candidates the one maximizing $\hat{M}$, with minimum-norm tie-break. Then the selected candidate $\hat{a}$ is population feasible: $A(\hat{a}) \le a_0$ and $R(\hat{a}) \ge r_0$.

Furthermore, suppose that, with probability at least $1 - \delta_m$, for every ordered pair $(a, b) \in \mathcal{G}^2$,
\[
  M(a) - M(b) \ge \hat{M}(a) - \hat{M}(b) - \mathrm{rad}_{ab}.
\]
Then, on the joint event, for every candidate $b$ satisfying the stronger population slack
\[
  A(b) \le a_0 - 2\epsilon_A(b), \qquad R(b) \ge r_0 + 2\epsilon_R(b),
\]
we have
\[
  M(\hat{a}) \ge M(b) - \mathrm{rad}_{\hat{a}b}.
\]
In particular, $\hat{a}$ is near-optimal over the slacked feasible grid, with the loss controlled by paired candidate-difference uncertainty rather than by a raw uniform Hoeffding radius for $M$.
\end{theorem}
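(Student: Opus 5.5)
The plan is a deterministic argument carried out on the joint high-probability event; no concentration argument is needed beyond what the hypotheses already supply. Let $E_c$ be the event on which every candidate satisfies the $A$ and $R$ deviation bounds, and $E_m$ the event on which the paired margin inequality holds for all ordered pairs. A union bound gives $\Pr(E_c \cap E_m) \ge 1 - \delta_c - \delta_m$, and we work on this intersection throughout.

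\emph{Feasibility.} On $E_c$, suppose $\hat{a}$ passes the tightened test. Then
\[
A(\hat{a}) \le \hat{A}(\hat{a}) + \epsilon_A(\hat{a}) \le a_0 - \epsilon_A(\hat{a}) + \epsilon_A(\hat{a}) = a_0,
\]
and symmetrically
\[
R(\hat{a}) \ge \hat{R}(\hat{a}) - \epsilon_R(\hat{a}) \ge r_0 .
\]
If no candidate passes the test, the selector has no output. I will state this convention explicitly, and note that the near-optimality part below rules it out whenever some slacked $b$ exists.

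\emph{Slacked candidates are empirically feasible.} Take $b$ with $A(b) \le a_0 - 2\epsilon_A(b)$ and $R(b) \ge r_0 + 2\epsilon_R(b)$. On $E_c$,
\[
\hat{A}(b) \le A(b) + \epsilon_A(b) \le a_0 - \epsilon_A(b),
\qquad
\hat{R}(b) \ge R(b) - \epsilon_R(b) \ge r_0 + \epsilon_R(b),
\]
so $b$ is in the empirical feasible set. This step is the reason for the factor $2$ in the slack: one $\epsilon$ absorbs the estimation error and the other meets the tightened threshold.

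\emph{Comparison.} Since $\hat{a}$ maximizes $\hat{M}$ over the empirical feasible set and $b$ belongs to that set, $\hat{M}(\hat{a}) - \hat{M}(b) \ge 0$. The tie-break only chooses among maximizers, so it does not affect this inequality. Applying the $E_m$ inequality to the ordered pair $(\hat{a}, b)$ gives
\[
M(\hat{a}) - M(b) \ge \hat{M}(\hat{a}) - \hat{M}(b) - \mathrm{rad}_{\hat{a} b} \ge -\mathrm{rad}_{\hat{a} b},
\]
which is the claim.

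The one point that needs care, and the step I expect to be the main obstacle, is that $\hat{a}$ is data-dependent. The pair inequality therefore has to hold simultaneously over all of $\mathcal{G}^2$, not only for a fixed pair; the hypothesis is stated uniformly over pairs precisely for this reason, so the random index may be plugged in. I will also remark that $\mathrm{rad}_{ab}$ may be taken from paired differences evaluated on the same validation prompts, which yields the advertised improvement over a per-candidate Hoeffding radius for $M$.
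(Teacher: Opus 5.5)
Your proof is correct and matches the paper's argument step for step. It shows feasibility of $\hat{a}$ from the tightened test plus the deviation bound, then empirical feasibility of any $2\epsilon$-slacked $b$, then combines maximality of $\hat{M}(\hat{a})$ with the pairwise event at the data-dependent pair $(\hat{a}, b)$; your explicit union bound for the joint event, the empty-feasible-set convention, and the note that uniformity over $\mathcal{G}^2$ licenses plugging in $\hat{a}$ are small clarifications the paper leaves implicit.
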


\begin{proof}
The feasibility claim follows immediately: if $\hat{A}(\hat{a}) \le a_0 - \epsilon_A(\hat{a})$ and $\abs{\hat{A} - A} \le \epsilon_A$, then $A(\hat{a}) \le a_0$; the retention constraint is identical with the inequality reversed.

If $b$ satisfies the stronger population slack, then the same concentration event implies $\hat{A}(b) \le a_0 - \epsilon_A(b)$ and $\hat{R}(b) \ge r_0 + \epsilon_R(b)$, so $b$ is empirically feasible. Since $\hat{a}$ maximizes $\hat{M}$ over the empirical feasible set, $\hat{M}(\hat{a}) - \hat{M}(b) \ge 0$, up to deterministic tie-breaking. The pairwise margin event gives
\[
  M(\hat{a}) - M(b) \ge \hat{M}(\hat{a}) - \hat{M}(b) - \mathrm{rad}_{\hat{a}b} \ge -\mathrm{rad}_{\hat{a}b}. \qedhere
\]
\end{proof}

\begin{proposition}[Paired empirical-Bernstein radius]
\label{prop:bernstein}
Let $M(a) = \E[m_i(a)]$, where validation units are independent and shared across candidates. For an ordered pair $(a, b)$, define
\[
  X^{ab}_i = m_i(a) - m_i(b), \qquad
  \hat{D}_{ab} = \frac{1}{n}\sum_i X^{ab}_i, \qquad
  \hat{\sigma}^2_{ab} = \frac{1}{n - 1}\sum_i (X^{ab}_i - \hat{D}_{ab})^2.
\]
Assume the random variables $X^{ab}_i$ lie almost surely in an interval of length $B_{ab}$. A standard empirical-Bernstein union bound gives, simultaneously over all ordered pairs, with probability at least $1 - \delta$,
\[
  M(a) - M(b) \ge \hat{D}_{ab} - \sqrt{\frac{2\hat{\sigma}^2_{ab}\log(2\abs{\mathcal{G}}^2/\delta)}{n}} - \frac{7 B_{ab}\log(2\abs{\mathcal{G}}^2/\delta)}{3(n - 1)}.
\]
Thus Theorem~\ref{thm:selector} can use this data-dependent $\mathrm{rad}_{ab}$. When candidates are evaluated on the same validation prompts, the paired variance $\hat{\sigma}^2_{ab}$ can be far smaller than the marginal variance of either candidate's raw margin.
\end{proposition}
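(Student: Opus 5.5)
For each fixed ordered pair $(a,b)$ I will apply a one-sided empirical-Bernstein inequality to the i.i.d.\ differences $X^{ab}_i$, and then take a union bound over the $\abs{\mathcal{G}}^2$ ordered pairs.

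The premise that validation units are independent and shared across candidates makes $X^{ab}_1,\dots,X^{ab}_n$ i.i.d. Their common mean is $\E[X^{ab}_i]=M(a)-M(b)$, and by hypothesis they lie in an interval of length $B_{ab}$. I will invoke the empirical-Bernstein bound of Maurer--Pontil in its one-sided form, without reproving it. For i.i.d.\ variables in $[0,1]$ with sample variance $\hat\sigma^2$, it says that with probability at least $1-\delta'$,
\[
\E X - \bar X \le -\Big(\sqrt{\tfrac{2\hat\sigma^2\log(2/\delta')}{n}}+\tfrac{7\log(2/\delta')}{3(n-1)}\Big)
\]
fails. Equivalently, $\E X \ge \bar X - \sqrt{2\hat\sigma^2\log(2/\delta')/n} - 7\log(2/\delta')/(3(n-1))$. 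Since the variables lie in an interval of length $B_{ab}$, I rescale them to $[0,1]$ by an affine map $x\mapsto (x-c)/B_{ab}$. The sample standard deviation scales by $1/B_{ab}$, so multiplying back by $B_{ab}$ gives the variance term $\sqrt{2\hat\sigma^2_{ab}\log(2/\delta')/n}$ with the unnormalized $\hat\sigma_{ab}$, and the range term $7B_{ab}\log(2/\delta')/(3(n-1))$. This matches the displayed radius with $\hat D_{ab}$ in place of $\bar X$.

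Next I set $\delta'=\delta/\abs{\mathcal{G}}^2$, so that $\log(2/\delta')=\log(2\abs{\mathcal{G}}^2/\delta)$. A union bound over the $\abs{\mathcal{G}}^2$ ordered pairs then makes all inequalities hold simultaneously with probability at least $1-\delta$. Diagonal pairs hold trivially, since $X^{aa}_i=0$, so they can be included or dropped without changing the bound. Defining $\mathrm{rad}_{ab}$ as the sum of the two subtracted terms yields exactly the pairwise event required in Theorem~\ref{thm:selector} with $\delta_m=\delta$. That theorem's conclusion then follows on the joint event.

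The main obstacle is bookkeeping of the constants rather than any new idea. The standard statement is usually given as $\log(2/\delta)$ for the two-sided bound, or as $\log(2/\delta)$ with $n\ge 2$ for one side, depending on the source. I will follow the Maurer--Pontil constant convention, which already contains the factor $2$, so the one-sided use is conservative. I will also note that $n\ge 2$ is required for $\hat\sigma^2_{ab}$ to be defined.

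Finally, the closing remark about paired variance is interpretive and needs no proof beyond an identity. Because $\mathrm{Var}(m_i(a)-m_i(b))=\mathrm{Var}(m_i(a))+\mathrm{Var}(m_i(b))-2\,\mathrm{Cov}(m_i(a),m_i(b))$, a positive correlation across candidates on shared prompts makes $\hat\sigma^2_{ab}$ small.
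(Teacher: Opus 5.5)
Your proposal is correct and is the argument the paper intends, which it states only as ``a standard empirical-Bernstein union bound'': apply Maurer--Pontil's empirical-Bernstein bound to the affinely rescaled paired differences $X^{ab}_i$ at level $\delta/\abs{\mathcal{G}}^2$ per ordered pair, then take a union bound. One small correction to your commentary: the factor $2$ in Maurer--Pontil's $\log(2/\delta)$ comes from splitting $\delta$ between Bennett's inequality and the concentration of the sample standard deviation, not from two-sidedness. Their bound is already one-sided, so your use of it is exact rather than conservative; the lower tail you need follows by applying it to $1-(X^{ab}_i-c)/B_{ab}$, which has the same sample variance.
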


\begin{remark}[What this selector theorem certifies]
This is an oracle-free finite-grid theorem. It does not claim global optimality over all possible unlearning methods. It says that the fixed validation rule solves the declared constrained deployment problem over $\mathcal{G}$, up to explicit feasibility and paired-difference uncertainty. The grid may contain naive task arithmetic ($\gamma = 0$), the process-JVP line ($\gamma = \lambda$), and the full two-dimensional family. A nonzero off-diagonal coefficient wins only if it wins the same predeclared validation problem.
\end{remark}

\begin{proposition}[Across-trial sign-test validation]
\label{prop:signtest}
For held-out comparisons across independent trials, let $D_j$ be the test metric difference between two fixed procedures on trial $j$. Under the null hypothesis that the median difference is nonpositive, and assuming no ties or random tie-breaking, observing $s$ positive signs in $T$ trials has one-sided $p$-value
\[
  \sum_{k=s}^{T} \binom{T}{k} 2^{-T}.
\]
This validates a fixed method comparison, not the finite-sample optimality of a within-trial selector. It is nevertheless the correct distribution-free test for claims such as ``the selected two-dimensional method beats a matched process-JVP baseline across independent trials.''
\end{proposition}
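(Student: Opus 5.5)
The plan is to reduce the statement to the classical sign test by showing that, under the null, each positive-sign indicator is a Bernoulli variable with success probability at most $1/2$, and that these indicators are independent across trials. I would fix the two procedures once and for all, before any held-out evaluation. This includes any within-trial validation selector, which is a deterministic function of the trial's own training trace and validation data. Each trial $j$ then produces a single scalar $D_j$ from its own data, seeds, and held-out probes. Because trials use independent randomness, the $D_j$ are independent. I would define $\xi_j = \mathbf{1}\{D_j > 0\}$. Under the null $\Pr(D_j > 0) \le 1/2$: either directly, or because continuity or random tie-breaking gives $\Pr(D_j = 0) = 0$ or splits ties evenly, so a nonpositive median implies $\Pr(D_j > 0) \le 1/2$.

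Next, let $S = \sum_j \xi_j$. Under independence, $S$ is a sum of independent Bernoulli$(\pi_j)$ variables with every $\pi_j \le 1/2$. I would invoke stochastic monotonicity: a Poisson-binomial sum with all $\pi_j \le 1/2$ is stochastically dominated by $\mathrm{Bin}(T, 1/2)$. This follows by a simple coupling, $\xi_j = \mathbf{1}\{U_j < \pi_j\} \le \mathbf{1}\{U_j < 1/2\}$ with the $U_j$ i.i.d.\ uniform. Consequently $\Pr_{H_0}(S \ge s) \le \sum_{k=s}^{T}\binom{T}{k}2^{-T}$, with equality at the least-favorable boundary $\pi_j = 1/2$. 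The displayed sum is therefore a valid one-sided $p$-value, and it is exact under the boundary null. No distributional assumption on the magnitudes of $D_j$ enters, which is what makes the test distribution-free.

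The remaining, interpretive part is to make precise what is being validated. The randomness in $S$ is across trials, and the null concerns the distribution of the fixed pipeline's output $D_j$. Within-trial selection is therefore simply part of the procedure being compared. The test says nothing about whether the selector is finite-sample optimal within a trial; that question is the content of Theorem~\ref{thm:selector}, not of this proposition. I would note the block variant used in Section~\ref{sec:seeddisjoint}: applying the same argument to block means requires independence only across blocks.

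I expect the main obstacle to be the tie and independence hypotheses rather than the calculation. Ties must be handled so that the null yields $\pi_j \le 1/2$ for strict positivity, which is why the statement assumes no ties or random tie-breaking. Independence must also be justified by the experimental design, in particular by the fact that no held-out information flows between trials. Given these, the proof is just the coupling step.
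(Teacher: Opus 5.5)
Your argument is correct, and it is the standard sign-test reasoning the paper relies on. The paper gives no proof of this proposition. Its only justification (Section~\ref{sec:seeddisjoint}) is that trial signs are ``exchangeable Bernoulli$(1/2)$ under $H_0$,'' which argues only at the point null. Your coupling $\mathbf{1}\{U_j<\pi_j\}\le\mathbf{1}\{U_j<1/2\}$ is what actually covers the composite null in the statement (median $\le 0$, trials independent but not identically distributed). It shows the displayed tail is a conservative $p$-value, exact only at $\pi_j=1/2$, which is a genuine gain in rigor over the paper's framing.

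One side step is wrong. With $\xi_j=\mathbf{1}\{D_j>0\}$ you need no tie assumption at all. For a median $m\le 0$ you have $\Pr(D_j\le 0)\ge\Pr(D_j\le m)\ge 1/2$, so $\Pr(D_j>0)\le 1/2$, and ties simply fall on the null side. Your claim that random tie-breaking ``splits ties evenly'' and so preserves the bound conflates two probabilities. Under randomized tie-breaking the counted sign is positive with probability $\Pr(D_j>0)+\tfrac12\Pr(D_j=0)$, and this can exceed $1/2$ under a median null. For example, take $D_j=0$ with probability $0.6$ and $D_j=1$ with probability $0.4$: the median is $0$, yet the counted-positive probability is $0.7$. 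That branch of the statement requires the stronger null $\Pr(D_j>0)\le\Pr(D_j<0)$, so you should drop it rather than claim to have proved it.

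Likewise, your block remark needs the null imposed on the block means directly, as the paper does. A nonpositive per-trial median does not imply a nonpositive median for a within-block average.
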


\subsection{Gradient-ascent unlearning: conditional safety drift, not impossibility}
\label{app:drift}

Let an FT-unlearning procedure update
\[
  \theta_{t+1} = \theta_t + \alpha d_t,
\]
where $d_t$ is the actual ascent/update direction produced by the procedure for the memory objective, possibly including optimizer preconditioning and retention terms. Let $L_S$ be a safety loss.

\begin{proposition}[Safety drift under positive-curvature memory ascent]
\label{prop:safety-drift}
Assume $L_S \in C^3$ along the unlearning trajectory and that its Hessian is $L_3$-Lipschitz there. Then each step satisfies
\[
  L_S(\theta_{t+1}) - L_S(\theta_t) = \alpha \inner{\nabla L_S(\theta_t)}{d_t} + \frac{\alpha^2}{2} d_t^\top \nabla^2 L_S(\theta_t) d_t + R_t,
\]
with
\[
  \abs{R_t} \le \frac{L_3}{6}\alpha^3 \norm{d_t}^3.
\]
If
\[
  \abs{\inner{\nabla L_S(\theta_t)}{d_t}} \le \epsilon_t \norm{d_t}, \qquad
  d_t^\top \nabla^2 L_S(\theta_t) d_t \ge \Lambda_t \norm{d_t}^2,
\]
then after $K$ steps,
\[
  L_S(\theta_K) - L_S(\theta_0) \ge \sum_{t=0}^{K-1}\left( -\alpha \epsilon_t \norm{d_t} + \frac{\alpha^2}{2}\Lambda_t \norm{d_t}^2 - \frac{L_3}{6}\alpha^3 \norm{d_t}^3 \right).
\]
If additionally $\norm{d_t} \in [G_{\min}, G_{\max}]$, $\epsilon_t \le \epsilon$, and $\Lambda_t \ge \Lambda > 0$, then
\[
  L_S(\theta_K) - L_S(\theta_0) \ge K\left( -\alpha \epsilon G_{\max} + \frac{\alpha^2}{2}\Lambda G_{\min}^2 - \frac{L_3}{6}\alpha^3 G_{\max}^3 \right).
\]
\end{proposition}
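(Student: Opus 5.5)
The plan is to reduce everything to a one-dimensional Taylor expansion along each unlearning step, bound the cubic remainder with the Lipschitz Hessian, then sum over steps and apply the uniform bounds.

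\textbf{Per-step identity.} Fix a step $t$ and define the scalar function $\phi(s) = L_S(\theta_t + s\alpha d_t)$ for $s \in [0,1]$. Since $L_S \in C^3$ along the trajectory, $\phi$ is $C^2$ with
\[
  \phi'(0) = \alpha\inner{\nabla L_S(\theta_t)}{d_t},
  \qquad
  \phi''(s) = \alpha^2 d_t^\top \nabla^2 L_S(\theta_t + s\alpha d_t) d_t .
\]
Taylor's theorem with integral remainder gives $\phi(1) - \phi(0) = \phi'(0) + \int_0^1 (1-s)\phi''(s)\,\dd s$. We write $\phi''(s) = \phi''(0) + [\phi''(s) - \phi''(0)]$. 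The $\phi''(0)$ piece integrates to $\tfrac12\phi''(0)$, which is the displayed quadratic term. The remaining piece is $R_t$.

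\textbf{Remainder bound.} Hessian $L_3$-Lipschitzness gives
\[
  \abs{\phi''(s) - \phi''(0)} \le \alpha^2\norm{d_t}^2 \cdot L_3\, s\alpha\norm{d_t}.
\]
Hence
\[
  \abs{R_t} \le L_3\alpha^3\norm{d_t}^3\int_0^1 (1-s)s\,\dd s = \frac{L_3}{6}\alpha^3\norm{d_t}^3,
\]
which is exactly the stated constant. The one subtlety is that the whole segment $[\theta_t, \theta_{t+1}]$ must lie in the region where the Lipschitz hypothesis holds. I will read "along the unlearning trajectory" as covering these segments and state that reading explicitly. This interpretive point is the only real obstacle; everything else is mechanical.

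\textbf{Lower bound after $K$ steps.} Each term is bounded below using its hypothesis:
\[
  \alpha\inner{\nabla L_S}{d_t} \ge -\alpha\epsilon_t\norm{d_t},
  \qquad
  \tfrac{\alpha^2}{2} d_t^\top\nabla^2 L_S\, d_t \ge \tfrac{\alpha^2}{2}\Lambda_t\norm{d_t}^2,
  \qquad
  R_t \ge -\tfrac{L_3}{6}\alpha^3\norm{d_t}^3 .
\]
Summing the per-step identities telescopes the left side to $L_S(\theta_K) - L_S(\theta_0)$, which gives the first multi-step bound.

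\textbf{Uniform bound.} In each summand I replace $\epsilon_t\norm{d_t}$ by $\epsilon G_{\max}$ and $\norm{d_t}^3$ by $G_{\max}^3$, since both enter with a negative sign. I replace $\Lambda_t\norm{d_t}^2$ by $\Lambda G_{\min}^2$, which is valid because $\Lambda_t \ge \Lambda > 0$ and $\norm{d_t} \ge G_{\min}$. Each summand is then bounded below by the same constant, and multiplying by $K$ gives the final inequality.
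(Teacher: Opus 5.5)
Your proposal is correct and follows the same route as the paper's proof. Both take a second-order Taylor expansion of $L_S(\theta_t+\alpha d_t)$ with a Lipschitz-Hessian remainder, lower-bound each term using the hypotheses, and sum over steps. Your integral-form remainder computation (with $\int_0^1 (1-s)s\,\dd s = 1/6$), your reading that each segment $[\theta_t,\theta_{t+1}]$ lies in the Lipschitz region, and the tacit step-size condition $\alpha > 0$ only make explicit details the paper leaves implicit.
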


\begin{proof}
Taylor expand $L_S(\theta_t + \alpha d_t)$ to second order with third-order Lipschitz-Hessian remainder. Lower-bound the linear term by $-\alpha \epsilon_t \norm{d_t}$, lower-bound the quadratic term by $(\alpha^2/2)\Lambda_t \norm{d_t}^2$, upper-bound the magnitude of the remainder, and sum over $t$.
\end{proof}

\begin{corollary}[Eigenprojection form]
\label{cor:eigen}
Let $P_k$ be the orthogonal projector onto an invariant subspace of $\nabla^2 L_S(\theta_t)$, equivalently a span of its eigenvectors, on which $\nabla^2 L_S(\theta_t) \succeq \lambda_k I$. Suppose the invariant orthogonal complement has curvature lower bounded by $-\beta I$. If $\norm{P_k d_t}^2 \ge \rho_t \norm{d_t}^2$, then
\[
  d_t^\top \nabla^2 L_S(\theta_t) d_t \ge \big( \rho_t \lambda_k - (1 - \rho_t)\beta \big)\norm{d_t}^2.
\]
Thus the proposition applies with $\Lambda_t = \rho_t \lambda_k - (1 - \rho_t)\beta$ whenever this quantity is positive.
\end{corollary}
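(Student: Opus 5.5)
The plan is to work with the symmetric part of the Hessian and decompose $d_t$ along the invariant splitting. Write $A = \nabla^2 L_S(\theta_t)$, which is symmetric because $L_S \in C^3$. Let $P_k$ be the orthogonal projector onto the given invariant subspace and $Q = I - P_k$ the projector onto its orthogonal complement. For a symmetric matrix, invariance of $\operatorname{range}(P_k)$ implies that $\operatorname{range}(Q)$ is invariant as well, so $A$ commutes with $P_k$.

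Split $d_t = x + y$ with $x = P_k d_t$ and $y = Q d_t$. The first step is to show that the cross terms vanish:
\[
  x^\top A y = d_t^\top P_k A Q d_t = d_t^\top A P_k Q d_t = 0,
\]
using $P_k Q = 0$. This gives
\[
  d_t^\top A d_t = x^\top A x + y^\top A y .
\]
This cancellation is the main point where invariance, rather than an arbitrary subspace, is needed. It is also the only step that needs care: one must justify that the complement is invariant, which follows from symmetry of $A$.

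Next, apply the two curvature bounds. On $\operatorname{range}(P_k)$ we have $x^\top A x \ge \lambda_k \norm{x}^2$. On the complement we have $y^\top A y \ge -\beta \norm{y}^2$. By Pythagoras, $\norm{x}^2 + \norm{y}^2 = \norm{d_t}^2$, so $\norm{y}^2 = \norm{d_t}^2 - \norm{x}^2$. Therefore
\[
  d_t^\top A d_t \ge \lambda_k \norm{x}^2 - \beta\big(\norm{d_t}^2 - \norm{x}^2\big) = (\lambda_k + \beta)\norm{x}^2 - \beta\norm{d_t}^2 .
\]

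The final step is to insert the alignment hypothesis $\norm{x}^2 \ge \rho_t \norm{d_t}^2$. This requires the coefficient $\lambda_k + \beta$ to be nonnegative. I will assume $\beta \ge 0$ and $\lambda_k \ge 0$, which is the intended reading of a "lower bound $-\beta$" together with positive curvature $\lambda_k$. If $\lambda_k + \beta < 0$ the substitution would reverse direction, so this sign check needs to be stated explicitly. Under that assumption,
\[
  d_t^\top A d_t \ge (\lambda_k + \beta)\rho_t \norm{d_t}^2 - \beta \norm{d_t}^2 = \big(\rho_t\lambda_k - (1 - \rho_t)\beta\big)\norm{d_t}^2 .
\]

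With this bound, the hypothesis $d_t^\top \nabla^2 L_S(\theta_t) d_t \ge \Lambda_t \norm{d_t}^2$ of Proposition~\ref{prop:safety-drift} holds with $\Lambda_t = \rho_t\lambda_k - (1 - \rho_t)\beta$. The conclusion of Proposition~\ref{prop:safety-drift} then applies whenever this $\Lambda_t$ is positive.
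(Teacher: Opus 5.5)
Your proof is correct, and it is the standard invariant-splitting argument that the paper leaves implicit, since the corollary is stated without a separate proof: symmetry makes the complement invariant, the cross term vanishes, and the two curvature bounds combine via $\norm{P_k d_t}^2 + \norm{(I-P_k)d_t}^2 = \norm{d_t}^2$. Your sign check exposes a genuine hidden hypothesis of the statement (for example, $\nabla^2 L_S = \diag(-2,0)$, $\lambda_k=-2$, $\beta=0$, $\rho_t=1/2$, $d_t=e_1$ violates the displayed bound), and your combined form $(\lambda_k+\beta)\norm{x}^2-\beta\norm{d_t}^2$ shows that $\lambda_k+\beta\ge 0$ alone suffices, so you need not assume $\lambda_k\ge 0$ and $\beta\ge 0$ separately.
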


\begin{remark}[Scope]
This is a local failure-mode theorem. It does not prove that FT-unlearning is impossible, and it should not be used to predict the magnitude of large non-perturbative refusal collapse. It says that once safety training has made the first-order safety gradient small, memory-ascent directions with positive projection onto safety-curvature directions increase safety loss to second order.
\end{remark}

\subsection{Boundary cases for the second-order frontier}
\label{app:boundary}

\paragraph{Hessian anisotropy alone is insufficient.}
Let $p = (1, 1)$, $r = (0, 1)$, $H = \diag(1, 10)$, $g = 0$, and $q = 2p$. The Hessian is anisotropic on the plane, but $q \in \spanof\{p\}$, so the process-JVP line cancels the second-order residual exactly. The full two-dimensional family cannot improve on zero residual.

\paragraph{Hessian anisotropy is not necessary.}
Let $H = I$, $p = (1, 0)$, $r = (0, 1)$, $g = 0$, and $q = r$. The process-JVP line cannot cancel the $r$ component, whereas the two-dimensional family can. Strict improvement holds with an isotropic Hessian.

\paragraph{Raw $\gamma$ sign is not invariant.}
The natural off-diagonal coordinate is $\zeta = \lambda - \gamma$. Reparameterizing the same edit family changes the raw coefficient attached to $r$, but the condition in Theorem~\ref{thm:frontier-app} predicts the sign of $\zeta^*$, not a universal sign for $\gamma$.

\paragraph{Finite-scale secant.}
The centered-secant theorem is asymptotic. If the implemented estimator is
\[
  \hat{J}_{S,\varepsilon}(u) = \frac{T(\theta_A + \varepsilon u) - T(\theta_A - \varepsilon u)}{2\varepsilon},
\]
then using it in the sidecar point gives
\[
  \theta_{AMS} - \hat{J}_{S,\varepsilon}(u) - \theta_{AS} = O(\norm{u}^2) + O(\varepsilon^2 \norm{u}^3).
\]
Thus $\varepsilon = 1$ is a centered secant at the memory-edit scale. The derivative theorem is the local limiting statement; finite-scale accuracy is an empirical diagnostic.

\paragraph{Indefinite projected Hessians.}
The closed-form coefficient
\[
  \zeta^* = -\frac{q^\top H r_\perp + g^\top r_\perp}{\norm{r_\perp}^2_H}
\]
is a minimizer only under the positive-definite or second-order-sufficient condition stated in Theorem~\ref{thm:frontier-app}. If the projected Hessian on $\spanof\{p, r\}$ is indefinite, this expression may describe a stationary point or may be undefined as an optimization certificate. In that case we use the trust-region nesting statement
\[
  W_1 \cap B_\rho \subseteq W_2 \cap B_\rho
\]
and the empirical grid selector. Any magnitude correlation involving $\zeta^*$ should be reported only on trials satisfying the theorem's curvature hypothesis, or explicitly labeled as a diagnostic score rather than a theorem-predicted minimizer.

\begin{lemma}[Linear-dominated off-diagonal sign diagnostic]
\label{lem:sign}
Let
\[
  \ell(e) = \ell_0 + g^\top e + \tfrac{1}{2} e^\top H e + O(\norm{e}^3)
\]
be the local scalar objective to minimize, and write $e(\tau, \zeta) = q + \tau p + \zeta r$. Let $r_E = r - P^E_{\spanof\{p\}} r$ be the Euclidean component of $r$ not reachable by the process-JVP line. After profiling over $\tau$, suppose the objective variation in the $\zeta$ direction is linear-dominated on the searched interval: for all searched $\zeta$,
\[
  \abs{\inner{H e(\tau(\zeta), \zeta)}{r_E}} + O(\norm{e}^2 \norm{r_E}) < \abs{\inner{g}{r_E}}.
\]
Then any local decrease from the process-JVP line moves in the direction
\[
  \sign(\zeta) = -\sign\inner{g}{r_E}.
\]
This sign diagnostic does not require $H$ to be positive definite, but it also does not give the positive-definite magnitude formula for $\zeta^*$.
\end{lemma}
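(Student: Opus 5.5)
The plan is to reduce the lemma to a one-variable sign argument along the profiled curve $\zeta \mapsto e(\tau(\zeta),\zeta)$, where $\tau(\zeta)$ is the profiled (best) process-JVP coordinate for each fixed $\zeta$. Write $r = P^E_{\spanof\{p\}} r + r_E$, so that $e(\tau,\zeta) = q + (\tau + \zeta c)p + \zeta r_E$ with $c = \inner{r}{p}/\norm{p}^2$. Reparametrize by $\tilde\tau = \tau + \zeta c$. Because $\tau$ is profiled, the $p$-component of $r$ is absorbed into the profiled coordinate. The only genuinely new direction introduced by moving off the line $\zeta = 0$ is $r_E$. Define the profiled objective $\Phi(\zeta) = \ell(e(\tau(\zeta),\zeta))$. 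A local decrease from the process-JVP line means $\Phi(\zeta) < \Phi(0)$ for some searched $\zeta \neq 0$.

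The key computation is the derivative of $\Phi$. By an envelope argument (the $\tau$-stationarity condition kills the $\dot\tau$ term, or alternatively one simply absorbs $\dot\tau\,p$ into the reparametrized $\tilde\tau$), we get
\[
  \Phi'(\zeta) = \inner{g + H e(\tau(\zeta),\zeta)}{r_E} + O(\norm{e}^2\norm{r_E}),
\]
where the error term comes from differentiating the cubic remainder of $\ell$. The linear-dominance hypothesis then says that on the whole searched interval
\[
  \abs{\Phi'(\zeta) - \inner{g}{r_E}} < \abs{\inner{g}{r_E}}.
\]
Hence $\Phi'(\zeta)$ has the constant sign $\sign\inner{g}{r_E}$ throughout, and in particular $\inner{g}{r_E} \neq 0$. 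It follows that $\Phi$ is strictly monotone on the searched interval. If $\inner{g}{r_E} > 0$, then $\Phi$ increases, so any $\zeta$ with $\Phi(\zeta) < \Phi(0)$ must be negative. If $\inner{g}{r_E} < 0$, the same argument forces $\zeta > 0$. This gives $\sign(\zeta) = -\sign\inner{g}{r_E}$. Note that no definiteness of $H$ is used anywhere: the argument relies only on the sign of the derivative, which is why no $\zeta^*$ magnitude formula comes out of it.

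The main obstacle is justifying the envelope step when $H$ is indefinite. In that case the profiled minimizer $\tau(\zeta)$ need not exist or be differentiable. My first attempt is to avoid profiling in the derivative: fix any admissible selection $\tau(\zeta)$ and note that the hypothesis is stated along that selection. I would then compare $\Phi(\zeta)$ with $\Phi(0)$ via the mean value theorem applied to $\zeta \mapsto \ell(q + \tilde\tau p + \zeta r_E)$ at a fixed $\tilde\tau$. Here the $p$-direction never enters, so the derivative in $\zeta$ is exactly $\inner{g + He}{r_E}$ plus the remainder, and strict monotonicity in $\zeta$ holds at every fixed $\tilde\tau$. Consequently any decrease achieved by changing $\zeta$ at the best $\tilde\tau$ must have the stated sign. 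If this two-step comparison is cleaner, I would present it as the proof, with the envelope derivation as motivation.
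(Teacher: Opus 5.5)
Your main argument is correct and is essentially the paper's proof. The paper also differentiates the profiled objective, lets the $\tau$-term drop out to get $\partial_\zeta \ell = \inner{g + He}{r_E} + O(\norm{e}^2\norm{r_E})$, and reads the sign off the dominance hypothesis. Your monotonicity step, together with the observation that the hypothesis forces $\inner{g}{r_E} \neq 0$, only makes explicit the ``any local decrease'' form of the conclusion, which the paper's one-line ending leaves implicit.

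There are two cautions about the alternative justifications you float.

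First, the envelope step is valid only through $\tau$-stationarity. After writing $\tilde\tau = \tau + \zeta c$, the chain rule still gives $\Phi'(\zeta) = \dot{\tilde\tau}\,\inner{\nabla \ell(e)}{p} + \inner{\nabla \ell(e)}{r_E}$. It is $\inner{\nabla \ell(e)}{p} = 0$ at the profiled $\tau$ that removes the first term; the reparametrization alone removes only the $c\,p$ part of $r$. Your worry about indefinite $H$ goes away once you note what ``profiling over $\tau$'' presupposes. It needs only curvature along the line, $p^\top H p > 0$, which for small $e$ gives an interior $C^1$ minimizer $\tau(\zeta)$ by the implicit function theorem. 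Meanwhile $H$ may still be indefinite on $\spanof\{p, r\}$, which is all the lemma's remark asserts.

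Second, do not replace the envelope argument with the fixed-$\tilde\tau$ comparison. The hypothesis bounds $\inner{He}{r_E}$ only at points of the profiled curve. On a segment $q + \tilde\tau p + s\, r_E$ with $\tilde\tau$ frozen, the quantity differs from its on-curve value by $(\tilde\tau - \tilde\tau(s))\, p^\top H r_E$, which nothing controls. For example, take $p = (1,0)$, $r = r_E = (0,1)$, $q = 0$, $g = (0,1)$, $H = vv^\top$ with $v = (1,1)$, and no cubic term, so that $\ell = \zeta + \tfrac{1}{2}(\tilde\tau + \zeta)^2$.
\begin{itemize}
\item The profiled curve is $\tilde\tau(\zeta) = -\zeta$. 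On it $He = 0$, so the hypothesis holds for every $\zeta$, and indeed $\Phi(\zeta) = \zeta$.
\item At fixed $\tilde\tau = -5$, however, $\partial_\zeta \ell = \zeta - 4$. So ``strict monotonicity in $\zeta$ at every fixed $\tilde\tau$'' is false.
\end{itemize}
The lemma's conclusion still holds in this example, but only the envelope route proves it.
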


\begin{proof}
The directional derivative of the profiled objective in the off-diagonal direction is
\[
  \partial_\zeta \ell(e(\tau(\zeta), \zeta)) = \inner{g + H e(\tau(\zeta), \zeta)}{r_E} + O(\norm{e}^2 \norm{r_E}),
\]
where the profiled $\tau$ term drops out along the component orthogonal to the line. Under the stated dominance condition, this derivative has the sign of $\inner{g}{r_E}$. Moving in the opposite sign of $\zeta$ decreases the objective.
\end{proof}

\paragraph{Selector certificates.}
The paired empirical-Bernstein theorem formalizes the finite-grid selector, but it is only an empirical certificate when the resulting radius is smaller than the observed paired margin gap. If the computed radius is larger than the effect at the validation size used here, the primary statistical evidence is the across-trial sign test over independent trials.

\paragraph{FT-unlearn magnitude.}
The safety-drift proposition is local and should not be used to predict the magnitude of large refusal-collapse values. The correct claim is directional: small first-order safety gradient plus positive safety curvature of the memory ascent direction produces second-order safety loss. Observed large collapses are empirical outcomes outside the perturbative magnitude regime.

\section{Experimental details}
\label{app:expdetails}

\paragraph{Data and phases.}
Each synthetic trial constructs a public skill split $D_A$, a memory split $D_M$, and a refusal split $D_S$. The memory set contains 64 high-entropy canaries paired with entity identifiers; the refusal set includes entity-specific prompts and decoys. We train $A$, then $M$ from $A$, then $S$ from $AM$. The oracle $AS$ runs the same $S$ phase from $A$.

\paragraph{Candidate grids.}
The 2D process-sidecar grid uses $\lambda \in \{0.75, 1.0, 1.25, 1.5\}$ and $\gamma \in \{-0.5, 0, 0.5, 1.0, 1.5, 2.0\}$ unless otherwise stated. Naive candidates are $\gamma = 0$ with the same $\lambda$ values. Process-1D candidates are $\gamma = \lambda$. The gamma-extension audit at Qwen-2.5-0.5B-Instruct extends $\gamma$ through $5.0$ and selects an interior point $(1.5, 2.5)$ rather than a maximum-grid corner.

\paragraph{Selector.}
The selector filters on validation secret AUC $\le 0.60$, validation skill retention $\ge 0.90$, and validation refusal preference rate $\ge 0.99$. It maximizes the validation refusal-margin improvement over $\theta_{AMS}$, transformed by $\log(1 + \max\{0, \cdot\})$, and breaks exact ties by edit norm. The selector is run separately for each declared family.

\paragraph{Statistics.}
The main intervals are percentile bootstrap intervals over independent trials. The matched-candidate claim reports paired per-trial differences and, for the primary statistical certificate, a sign test over 15 model-by-data-seed blocks that allows arbitrary dependence within each four-trial block. We do not use cluster-bootstrap confidence intervals because the number of model-level clusters is too small for a reliable 95\% bootstrap interval.

\paragraph{Compute, artifacts, and asset licensing.}
The reported grids ran on single-GPU workers: RTX 6000 Ada-class workers for the Qwen-2.5-0.5B-Instruct runs, A100 80GB-class workers for the Qwen-2.5-1.5B-Instruct runs and heavy baseline sweeps, and RTX 4090/A100-class workers for the Llama-3.2-1B-Instruct and 9B scale-probe runs. Single trials took roughly 10 minutes at 0.5B and 30--60 minutes at 1--1.5B depending on baseline coverage; the reported main, baseline, HVP, and fresh-confirmatory experiments used roughly \$60--\$80 of cloud GPU time. The supplement contains the training/editing code paths, selector scripts, analysis scripts, synthetic data generators, cached summary JSON files, figure-generation scripts, training-loss trajectories per phase, and a Python dependency manifest. We release synthetic canary generators and cached summaries, not trained checkpoints containing memorized canaries. Qwen-2.5-0.5B-Instruct/1.5B-Instruct, Qwen3-8B, and Qwen3.5-9B are used under Apache-2.0 \citep{qwen2024_05b,qwen2024_15b,qwen2025_8b_card,qwen2026_9b_card}, Llama-3.2-1B-Instruct under the Llama 3.2 Community License \citep{metaai2024}, and the WMDP/RMU baseline code under the MIT license \citep{cais2024}; NPO is implemented from the cited paper rather than vendored as an external package \citep{zhang2024npo}.

\section{HVP probe audit details}
\label{app:hvp}

For eight cached Qwen-2.5-1.5B-Instruct trials, we estimated a reduced Hessian at $\theta_{AS}$ by centered finite differences of the scalar safety validation loss. The theory is stated in raw $p = J_S(u)$ and $r = p - u$ coordinates, whereas this cache uses unit-normalized probe directions, and the profiled-$\tau$ linear-dominance hypothesis of Lemma~\ref{lem:sign} is not verified at grid scale. We therefore use this appendix as a curvature-hypothesis audit: the projected Hessian is positive definite in $3/8$ cached probes.

\begin{table}[h]
  \caption{Reduced-Hessian curvature audit on cached 1.5B trials. PD means the two-dimensional projected Hessian is positive definite. The selector evidence in the main text comes from held-out validation and matched-family test comparisons.}
  \label{tab:hvp}
  \centering
  \begin{tabular}{ccc}
    \toprule
    Trial & PD & $\lambda_{\min}$ \\
    \midrule
    0  & no  & $-1.04 \times 10^{-2}$ \\
    1  & no  & $-8.21 \times 10^{-3}$ \\
    2  & no  & $-1.53 \times 10^{-3}$ \\
    3  & yes & $+1.02 \times 10^{-2}$ \\
    4  & no  & $-8.73 \times 10^{-3}$ \\
    5  & no  & $-1.23 \times 10^{-2}$ \\
    10 & yes & $+3.81 \times 10^{-3}$ \\
    15 & yes & $+7.16 \times 10^{-3}$ \\
    \bottomrule
  \end{tabular}
\end{table}

\section{Matched-candidate selector details}
\label{app:matched}

The matched-candidate analysis reuses the cached 24-cell 2D grid. The diagonal subset $\gamma = \lambda$ is the process-JVP line inside that grid; the $\gamma = 0$ subset is naive task arithmetic. Running the same selector on these subsets controls evaluation noise and family restriction but not candidate count: the diagonal slice contains 2 cells and the $\gamma = 0$ slice contains 4, against the 24-cell 2D grid. Candidate count is controlled separately by the equal-density 24-cell dense process-JVP diagonal in \S\ref{sec:offdiag}.

\begin{table}[h]
  \caption{Per-run matched-candidate results.}
  \label{tab:matched}
  \centering
  \begin{tabular}{lcccc}
    \toprule
    Run & $n$ & 2D--matched-1D mean & 95\% CI & sign \\
    \midrule
    Qwen-2.5-0.5B-Instruct & 20 & $+0.202$ & $[+0.164, +0.241]$ & $20/20$ \\
    Qwen-2.5-1.5B-Instruct & 20 & $+0.364$ & $[+0.277, +0.470]$ & $20/20$ \\
    Llama-3.2-1B-Instruct  & 20 & $+0.381$ & $[+0.264, +0.513]$ & $20/20$ \\
    \bottomrule
  \end{tabular}
\end{table}

\section{Baseline hyperparameters and configurations}
\label{app:baselines}

All baselines share the LoRA setup of the main edits: rank 8, alpha 16, target modules $\{q, k, v, o\}_{\mathrm{proj}}$ and $\{\mathrm{up}, \mathrm{down}, \mathrm{gate}\}_{\mathrm{proj}}$ for the transformer scales (Qwen-2.5-0.5B-Instruct/1.5B-Instruct and Llama-3.2-1B-Instruct); the 9B run additionally targets the SSM-block projections $\{\mathrm{in}^{qkv}_{\mathrm{proj}}, \mathrm{in}^{z}_{\mathrm{proj}}, \mathrm{in}^{b}_{\mathrm{proj}}, \mathrm{in}^{a}_{\mathrm{proj}}, \mathrm{out}_{\mathrm{proj}}\}$. The same retain dataset (skill prompts and safety-general prompts) and the same memory-canary forget dataset are used for all baselines.

\paragraph{FT-unlearn.}
Let $L_{\mathrm{forget}}, L_{\mathrm{skill}}, L_{\mathrm{safety}}$ denote the negative log-likelihoods on the memory-canary, public-skill, and safety-general data respectively. A single optimization step minimises
\[
  L_{\mathrm{FT}} = -\alpha_{\mathrm{forget}} L_{\mathrm{forget}} + \alpha_{\mathrm{skill}} L_{\mathrm{skill}} + \alpha_{\mathrm{safety}} L_{\mathrm{safety}},
\]
so the negative sign on $L_{\mathrm{forget}}$ implements gradient ascent on the memory-canary NLL (i.e.\ unlearning) while the retain terms preserve skills and safety. Default learning rate $10^{-4}$, AdamW, batch size 8, one epoch. The FT-unlearn grid sweeps the six configurations
\[
  (\alpha_{\mathrm{forget}}, \alpha_{\mathrm{skill}}, \alpha_{\mathrm{safety}}) \in \{(0.5, 0.3, 0.5), (1.0, 0.3, 0.5), (2.0, 0.3, 0.5), (1.0, 1.0, 0.5), (1.0, 0.3, 1.0), (1.0, 1.0, 2.0)\}.
\]
The default-trial baseline sweep uses $(1.0, 0.3, 0.5)$.

\paragraph{NPO.}
Negative Preference Optimization \citep{zhang2024npo}: $\beta = 0.1$, NPO loss coefficient 1.0, gradient-difference retain coefficient 1.0, learning rate $10^{-4}$, AdamW, batch size 8, one epoch. Retain set as above.

\paragraph{RMU.}
Base-parameter RMU baseline, following the WMDP-RMU objective \citep{li2024wmdp}: layer ids $\{3, 4, 5\}$, parameter id 6 (the MLP block of the steering layer), steering coefficient 6.5, forget-loss weight $\alpha = 1200$, learning rate $5 \times 10^{-5}$, AdamW. This row updates selected base-model parameters rather than LoRA adapter coordinates, so behavioral metrics are reported but LoRA-coordinate weight-space closure is not.

\paragraph{TIES, DARE.}
TIES is per-tensor magnitude trim from mergekit \citep{goddard2024}; DARE is the canonical per-element Bernoulli$(d)$ drop with $1/d$ rescale from \citet{yu2024}. We use the two-line DARE formula directly because the mergekit Bernoulli helper does not apply this exact rescale. We sweep density $d \in \{0.2, 0.5, 0.8\}$ and scale $\lambda \in \{0.75, 1.0, 1.25, 1.5\}$ (twelve cells per method per trial), then apply the same selector. Identical retain and validation pipelines to the main 2D experiments.

\paragraph{Result summary at Qwen-2.5-1.5B-Instruct.}
The RMU/NPO/FT baseline sweep at the 1.5B scale has five trials. FT-unlearn has mean refusal closure $-43.39$ and median $-47.08$. NPO has mean refusal closure $-1.33$. RMU has refusal closure approximately zero and secret AUC remains high rather than reaching chance. The structured-synthetic canary has five trials; process-2D wins $5/5$ against naive on refusal, and FT-unlearn has mean refusal closure $-146.75$.

\section{Broader impacts}
\label{app:impacts}

Process sidecars are intended for privacy- and compliance-driven deletion when a model has learned private or proprietary facts before later safety training. The intended deployment is specific: remove a revocable memory while preserving the refusal behavior that protects the affected entities. The same mechanism would be harmful if used to strip safety policies or to create false assurance about deletion, so deployment should require held-out forgetting tests, refusal-preservation tests, and audit logs tying an edit to the realized training trace.

\section{Reproducibility statement}
\label{app:repro}

The experiments use fixed seeds and cached per-trial JSON summaries; trained state tensors are omitted from the supplement because they can contain memorized canaries. The selector is specified before test evaluation and does not use the oracle checkpoint. The appendix gives the exact theorem assumptions, the finite-grid selector statement, and the matched-candidate and HVP diagnostics.

\end{document}